\let\accentvec\vec
\let\vec\accentvec
\newtheorem{thm}{Theorem}
\newtheorem{lem}[thm]{Lemma}
\newtheorem{defn}[thm]{Definition}
\newtheorem{rem}[thm]{Remark}
\newcommand{\norm}[1]{\left\Vert#1\right\Vert}
\newcommand{\set}[1]{\left\{#1\right\}}
\newcommand{\E}{\mathbb E}
\journalname{Arxiv.org}
\begin{document}

\title{Large Margin Boltzmann Machines and Large Margin Sigmoid Belief Networks
}


\author{Xu Miao         \and
        Rajesh P.N. Rao 
}


\institute{Xu Miao \at
              Computer Science and Engineering,\\
              University of Washington, Seattle WA,98125 USA\\
              Tel.: +1-206-685-2035\\
              Fax: +1-206-543-2969\\
              \email{xm@cs.washington.edu}           
           \and
           Rajesh P.N. Rao \at
           Computer Science and Engineering, \\
           University of Washington, Seattle WA,98125 USA
}

\date{Received: date / Accepted: date}

\maketitle

\begin{abstract}
Current statistical
models for structured prediction make simplifying assumptions about
the underlying output graph structure, such as assuming a low-order
Markov chain, because exact inference becomes intractable as the
tree-width of the underlying graph increases. Approximate inference
algorithms, on the other hand, force one to trade off representational
power with computational efficiency. In this paper, we propose two new types of probabilistic
graphical models, large margin Boltzmann machines (LMBMs) and large
margin
sigmoid belief networks (LMSBNs), for structured prediction.
LMSBNs in particular  allow a very fast inference algorithm for
arbitrary graph structures that runs in polynomial time with a high
probability. This probability is data-distribution dependent and is
maximized in learning. The new approach overcomes the
representation-efficiency trade-off in previous models and allows fast structured prediction with complicated
graph structures. We
present results from applying a fully connected model to multi-label
scene classification and demonstrate that the proposed approach can yield significant performance
gains over current state-of-the-art methods.
\keywords{Structured Prediction \and Probabilistic Graphical Models \and Exact and Approximate Inference}
\end{abstract}

\section{Introduction}\label{sec:intro}
Structured prediction is an important machine learning problem that
occurs in many different fields, e.g., natural language processing,
protein structure prediction and semantic image annotation. The goal is to learn a function
that maps an input vector $\bf X$ to an output $\bf Y$, where
$\bf Y$ is a vector representing all the labels whose components take
on the value $+1$ or $-1$ (presence or absence of the corresponding
label). The traditional approach to such multi-label classification
problems is to train a set of binary classifiers
independently. Structured prediction on the other hand also considers
the relationships among the output variables $\bf Y$. For example, in the image annotation problem, an entire image or parts of an image are annotated with labels representing an object, a scene or an event involving multiple objects
 \citep{Carneiro07}. These labels are usually dependent on each other, e.g., buildings and beaches occur under the sky, a truck is a type
of automotive, and sunsets are more likely to co-occur with beaches,
sky, and trees (Figure~\ref{fig:annotation}). Such relations capture
the semantics among the labels and play an important role in human
cognition. A major advantage of structured prediction is that the structured
representation of the output can be much more compact than an
unstructured classifier, resulting in smaller sample complexity and
greater generalization \citep{Bengio07greedylayer-wise}.

\begin{figure*}
\centering
	\includegraphics[width=0.7\linewidth]{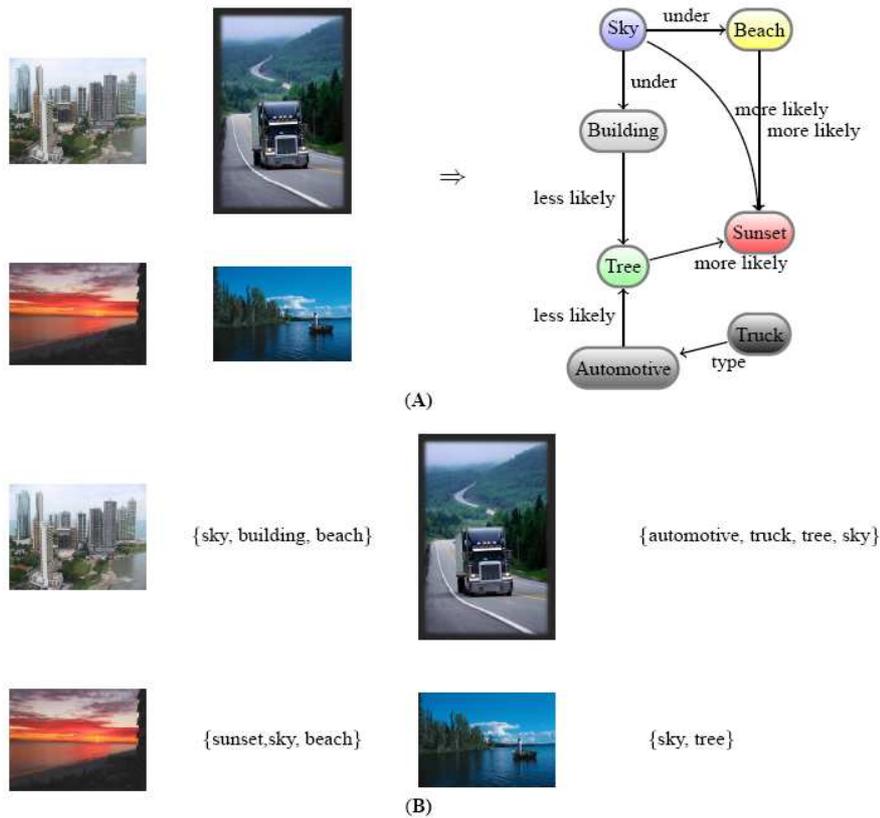}
  \caption{Semantic Image Annotation. (A). During training, we train a function mapping an image to the label semantics. (B). During prediction, we annotate each image a set of labels.}
  \label{fig:annotation}
\end{figure*}

Extending traditional classification techniques to structured
prediction is difficult because of the potentially complicated
inter-dependencies that may exist among the output variables. If the
problem is modeled as a \emph{probabilistic graphical model}, it is
well-known that exact inference over a general graph is
NP-hard. Therefore, practical approaches make simplifying assumptions
about the dependencies among the output variables in order to
simplify the graph structure and maintain tractability. Examples
include \emph{maximum entropy Markov models} 
(MEMMs) \citep{Mccallum00}, \emph{conditional random fields} 
(CRFs) \citep{Lafferty01,Quattoni04conditionalrandom}, \emph{max-margin Markov networks} (M3Ns) \citep{Taskar03} and \emph{structured support vector machines} (SSVMs) \citep{Tsochantaridis04}. These approaches typically restrict
the tree-width\footnote{In this paper, the tree-width of a
\emph{directed acyclic graph} refers to the tree-width of the
corresponding undirected graph obtained through moralization.} of the
graph so that the \emph{Viterbi} algorithm or the \emph{junction tree}
algorithm can still be efficient. 

On the other hand, there has been much research on fast approximate
inference for complicated graphs based on, e.g., \emph{Markov chain
Monte Carlo} (MCMC), \emph{variational inference}, or combinations of
these methods. In general, MCMC is slow, particularly for graphs with
strongly coupled variables. Good heuristics have been developed to
speed up MCMC, but they are highly dependent on graph structure and
associated parameters \citep{Doucet00}. \emph{Variational inference} is
another popular approach where a complicated distribution over $\bf Y$
is approximated with a simpler distribution so as to trade accuracy
for speed. For example, if the variables are assumed to be
independent, one obtains the \emph{mean field} algorithm. A Bethe
energy formulation yields the \emph{loopy belief propagation} (LBP)
algorithm \citep{Yedidia05}. If a combination of trees is considered,
one obtains the \emph{tree-reweighted sum-product}
algorithm \citep{Wainwright05b}. One can also relax the higher-order
marginal constraints to obtain a \emph{linear programming}
algorithm \citep{Wainwright05a}. The lesser the dependency constraints,
the less accurate these inference algorithms become, and the faster
their speed. However, the sacrificed accuracy in inference could be
detrimental to learning. For example, \emph{mean field} can produce
highly biased estimates, and \emph{loopy belief propagation} might
even cause the learning algorithm to diverge\\~\citep{Kulesza2007}.

Long-range dependencies and complicated
graphs are necessary to accurately and precisely represent semantic knowledge. Unfortunately, the approaches discussed above all operate under the assumption that one cannot avoid the trade-off between the
representational power and computational efficiency.

In this paper, we propose \emph{large margin sigmoid belief
 networks} (LMSBNs) and \emph{large margin Boltzmann machines} (LMBMs), two new models for structured prediction. We provide a theoretical analysis tool to derive the generalization bounds for both of them. Most importantly, LMSBNs  allow fast inference for arbitrarily complicated graph
 structures. Inference is based on a \emph{branch-and-bound} (BB)
 technique that does not depend on the dependency structure of the
 graph and exhibits the interesting property that the better the fit
 of the model to the data, the faster the inference procedure. 

Section~\ref{sec:lmsbnbm} describes both LMSBNs and LMBMs. 
We present learning algorithms for both and the fast BB inference algorithm for LMSBNs. LMBMs, being undirected, rely on traditional inference algorithms. 

Section~\ref{sec:experiments} applies both LMSBNs and LMBMs to the semantic
image annotation problem using a fully-connected graph structure.  We
empirically study the performance of the BB inference algorithm and
illustrate its efficiency and effectiveness. We present results from
experiments on a benchmark dataset which demonstrate that LMSBNs
outperform current state-of-the-art methods for image annotation based
on kernels and threshold-tuning.

\section{Large Margin Sigmoid Belief Networks and Large Margin Boltzmann Machines}\label{sec:lmsbnbm}
The \emph{sigmoid belief network} (SBN) \citep{Neal92} and \emph{Boltzmann machine} (BM) \citep{Hinton83} are a special
type of \emph{Bayesian network} and a special type of \emph{Markov random field} respectively, and are defined as follows:
\begin{defn}\label{defn:bm}
A \emph{Boltzmann machine} is an undirected graph $G=({\bf V},E)$, where $\bf V$ is the set of random variables with size $K = |\bf V|$, $E$ is the set of undirected edges. The joint likelihood is defined as:
\begin{eqnarray}\label{eq:bml}
\Pr({\bf V}|{\bf w}) &=& e^{\frac{1}{2}\sum_i z_i}/\sum_{\bf V}e^{\frac{1}{2}\sum_iz_i}\\
z_i&=& \sum_{j:(V_i,V_j)\in E}w_{ij}v_iv_j+ w_iv_i\nonumber
\end{eqnarray}
where $Z$ is the normalization constant.
\end{defn}
\begin{defn}\label{defn:sbn}
A \emph{sigmoid belief network} is a \emph{directed
acyclic graph} $G=({\bf V},E)$, where $\bf V$ is the set of random variables with size $K = |\bf V|$, $E$ is the set of directed edges. $(V_j,V_i)$ represents an edge from $V_j$ to $V_i$. For
each node $V_i$, its parents are in the set $pa(V_i)=\{V_j|(V_j,V_i)\in E\}$. The joint likelihood is defined as:
\begin{eqnarray}\label{eq:dcl}
\Pr({\bf V}|{\bf w})&=&\prod_{i=1}^K\Pr(V_i|pa(V_i),{\bf w})\\
\Pr(V_i|pa(V_i),{\bf w})&=&\frac{1}{1+e^{-z_i}}\nonumber\\
z_i &=& \sum_{j:V_j \in pa(V_i)}w_{ij}v_iv_j+w_iv_i\nonumber
\end{eqnarray}
\end{defn}

In BMs, the edges are undirected, so the feature $v_iv_j$ appears in both $z_i$ and $z_j$. In SBNs, the edges are directed, so the feature $v_iv_j$ appears in either $z_i$ or $z_j$, but not both. One can generalize the function $z_i$ to utilize high order features over a set of variables. In \emph{probabilistic graphical models}, this set is
referred to as a \emph{clique}. In SBNs or BMs, the features are defined as a product of all variables in the clique. For example, $C_1=\{V_1,V_2,V_3\}$ is a 3rd order clique, $f_1=v_1v_2v_3$. The edges are 2nd order cliques, e.g., $C_2=\{V_1,V_2\}$, $f_2=v_1v_2$. The first order cliques are the variable themselves, e.g., $C_3=\{V_1\}$, $f_3=v_1$. When the variables take values
$\{-1,1\}$, the feature function is also known as the \emph{parity}
function or the XOR function. Therefore, a SBN or BM softly encodes a
Boolean function via an AND-of-XOR expansion\footnote{This is
different from the \emph{ring-sum expansion} which is an XOR-of-AND
expansion.}, which provides a flexible way to encode human expert knowledge
into the model. Without ambiguity, we simplify the representation of $z_i$ to be $z_i=\sum_jw_{ij}f_j$, where the summation is taken over all cliques that include variable $V_i$. For SBNs, We require that all the variables in each clique $C_j$ other than $V_i$ must be parents of $V_i$. This requirement insures that the underlying graph is acyclic, and each $C_j$ is used in one $z_i$. 

In the structured prediction setting, the problem involves an input
vector $\bf X$, and the joint probability over all $\bf Y$ is conditioned
on $\bf X$, i.e., $\Pr({\bf Y}|\bf X, w)$. Note that $z_i$ is defined for each $Y_i$ although the cliques include both $\bf X$ and $\bf Y$.

When there is only one output variable, i.e. $K=1$, the conditional likelihoods of both SBNs and BMs become the same, i.e., $\Pr(Y=y|{\bf
x};{\bf w})=\frac{1}{1+e^{-z}}$, where $z = y\sum_jw_j\phi_j({\bf
x})$. The features are $f_j({\bf x},y)=y\phi_j({\bf x})$. This is the well known \emph{logistic regression} (LR) with a loss function $L(y,{\bf
x,w})=\log(1+e^{-z})$. In fact, a SBN can be considered as a product
of LRs according to a topological order over the graph. The overall
loss function is then $L({\bf y,x,w}) = \sum_i\log(1+e^{-z_i})$. A BM needs normalization over all $\bf Y$, the loss function usually can not be factorized locally that puts some challenge on learning.

To facilitate the derivation of a fast inference algorithm for LMSBNs and a fast learning algorithm for LMBMs, we use a hinge loss, $[1-z]_+ = \max(0,1-z)$ to approximate the log-loss $\log(1+e^{-z})$. We call the resulting SBN a \emph{large margin
sigmoid belief network} (LMSBN) and the resulting BM a \emph{large margin Boltzmann machine} (LMBM). The approximations are presented in Remark~\ref{rem:approx}. The approximation of LMBM is similar to \emph{pseudo likelihood} approximation of a \emph{Markov random field}. The only difference is the extra regularization. In the latter section, we will show that this regularizer is crucial for LMBMs to generalize well. Note that for LMSBNs, each feature $f_j$ only appears in one $z_i$, but for LMBMs, each feature $f_j$ appears in all $z_i$ where $Y_i\in C_j$. 

\begin{rem}
\begin{eqnarray}
	L_{SBN}({\bf y,x,w}) &\le& L_{LMSBN}({\bf y,x,w}) + Kb\nonumber\\
	L_{BM}({\bf y,x,w}) &\le& L_{LMBM}({\bf y,x,w}) + Kb + g(\bf x)\norm{\bf w}\nonumber\\
	L_{LMSBN}({\bf y,x,w}) &=& L_{LMBM}({\bf y,x,w}) = \sum_i [1-z_i]_+\label{eq:loss}\\
	z_i &=& \sum_{j:Y_i\in C_j}w_jf_j\nonumber\\
	b&^=&\log(e+e^{-1})\nonumber
\end{eqnarray}
\label{rem:approx}
\end{rem}
\begin{proof}
From Figure~\ref{fig:bounds}, it is easy to verify that $\log(1+e^{-z}) \le [1-z]_++b$, which leads to the first upper bound for SBN. For BM, because the features involves multiple variables ${Y_i}$ appear in all corresponding ${z_i}$, which makes the upper bounding much harder. Here we prove the second upper bound as follows:
\begin{eqnarray}
&&  L_{BM}({\bf y,x,w}) \nonumber\\
&=&-\frac{1}{2}\sum_iz_i+\log\sum_{\bf Y}e^{\frac{1}{2}\sum_iz_i}\nonumber\\
&=& -\frac{1}{2}\sum_iz_i+\log\sum_{{\bf Y}\setminus Y_1}e^{\frac{1}{2}\sum_{i\neq 1}z_i}\sum_{Y_1=\{-y_1,y_1\}}e^{\frac{1}{2}z_1}\nonumber\\
&\le&-\frac{1}{2}\sum_{i\neq 1}z_i + \log\sum_{{\bf Y} \setminus
Y_1}e^{\frac{1}{2}\sum_{i\neq 1}z_i}e^{[1-z_1]_++b}\nonumber\\
&\le&-\frac{1}{2}\sum_{i\neq 1}z_i + \log\sum_{{\bf Y} \setminus
\{Y_1,Y_2\}}e^{\frac{1}{2}\sum_{i\neq \{1,2\}}z_i}\nonumber\\
&&\quad \sum_{Y_2=\{-y_2,y_2\}}e^{\frac{1}{2}z_2}e^{[1-z_1]_++b}\nonumber\\
&\le&-\frac{1}{2}\sum_{i\neq \{1,2\}}z_i + \log\sum_{{\bf Y} \setminus
\{Y_1,Y_2\}}e^{\frac{1}{2}\sum_{i\neq \{1,2\}}z_i}\nonumber\\
&&\quad e^{[1-z_2]_++[1-z_1]_++2b+g_1({\bf x})\sum_{j:Y_1,Y_2\in C_j}w_j^2}\nonumber\\
&\le&\sum_i[1-z_i]_++Kb+g({\bf x})\sum_{j:C_j\in\mathcal{C}'}w_j^2
\end{eqnarray}
since the hinge loss for $Y_1$ also contains $Y_2$, when the partition function marginalizes $Y_2$, we have to relax the summation with a term proportional to the norm of the weights whose corresponding cliques include both $Y_1$ and $Y_2$. This relaxation is represented by $g_1({\bf x})\sum_{j:Y_1,Y_2\in C_j}w_j^2$, where $g_1$ is a constant determined by $\bf x$. After the whole partition function being relaxed, the upper bound contains a regularizer on all the weights whose corresponding cliques include at least two output $Y$. The set of all these cliques is $\mathcal{C}'$. \qed
\end{proof}

\begin{figure}%
\includegraphics[width=0.8\linewidth]{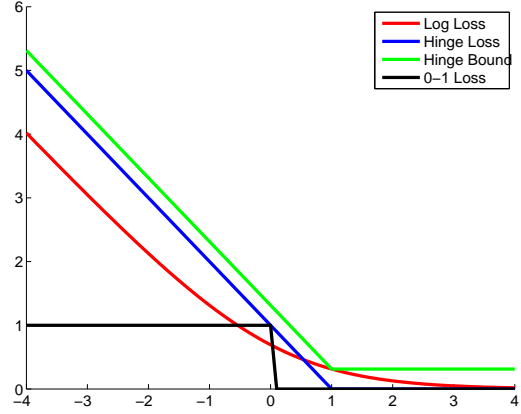}%
\caption{Losses and upper bound}%
\label{fig:bounds}%
\end{figure}

 Output values are predicted by minimizing the loss function, as shown in Equation~\ref{eqn:predict} below. With an
$\ell_2$ norm regularization on the weights $\bf w$, the training
problem for LMSBNs is defined as in Equation~\ref{eqn:learning}
below. Note that, for LMBMs, there is an extra $\ell_2$ regularization on the weights among the output $\bf Y$, but no regularization on the weights for individual $\bf Y$ or between $\bf X$ and $\bf Y$. 
\begin{eqnarray}
\hat{\bf y} &=& \arg\min_{\bf y} L(\bf y,x,w)\label{eqn:predict}\\
\hat{\bf w} &=& \arg\min_{\bf w} \frac{1}{N}\sum_{l=1}^N L({\bf y}_l,{\bf x}_l,{\bf w})+R(\bf w)\label{eqn:learning}\\
\mbox{LMDBNs} &:& R(\bf w) \equiv \lambda \norm{\bf w}_2^2\nonumber\\
\mbox{LMBMs} &:& R(\bf w) \equiv \lambda \norm{\bf w}_2^2 + \lambda\eta_0 \norm{{\bf w}_{\mathcal{C}'}}_2^2\nonumber
\end{eqnarray}

\subsection{Generalization Bound}\label{sec:erm}
One major concern of structured prediction, as well as all classification problems, is generalization performance. Generalization performance for structured prediction has not been as well studied as for binary and multi-class classification \citep{Taskar03, Tsochantaridis04, daume09searn}. Both \citeauthor{Taskar03} and \citeauthor{Tsochantaridis04} employed the maximum-margin approach that builds on binary \emph{support vector machines} (SVMs). Generalization performance can be addressed by an upper bound on the prediction errors. However, the derivation of the bound is specifically restricted to the loss function they use, and hard to apply to other loss functions. \citeauthor{daume09searn} consider a sequential decision approach that solves the structured prediction problem by making decisions one at a time. These sequential decisions are made multiple times, and the output is obtained by averaging all results. The generalization bound is analyzed in terms of all these binary classification losses. One major drawback of this approach is that the averaged losses for the averaged classifiers need a large number of iterations to converge. Even if it converges, the bound is still loose compared to the bound we presented. We will discuss this further in Section~\ref{sec:relm}. 

In this section, we provide a general analysis tool for both single variable classification and structured prediction that allows arbitrary loss functions and holds tight. We first need the following threshold theorem:
\begin{thm}\label{thm:threshold}
Assuming ${\bf X,Y} \sim \mathcal{D}$,if \quad $\forall {\bf y}\neq \hat{\bf y}$, $\exists T>0$, s.t. $L({\bf x,y,w}) > T$, then
\begin{equation}
\Pr(\hat{\bf y}\neq {\bf y}|{\bf w}) \le \frac{1}{T}\E_{\mathcal{D}}[L({\bf x,y,w})]
\end{equation}
\end{thm}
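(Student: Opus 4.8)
The plan is to recognize Theorem~\ref{thm:threshold} as a one-line consequence of Markov's inequality, once the correct event inclusion is in place. Fix the weights ${\bf w}$ throughout, and recall from Equation~\ref{eq:loss} that $L({\bf x,y,w})=\sum_i[1-z_i]_+$ is a sum of hinge terms, hence $L({\bf x,y,w})\ge 0$ for every $({\bf x},{\bf y})$ in the support of $\mathcal{D}$; this nonnegativity is what makes Markov applicable.

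First I would isolate the key observation linking the error event to the loss. Let $\hat{\bf y}=\arg\min_{\bf y}L({\bf y,x,w})$ be the prediction from Equation~\ref{eqn:predict}. On any sample $({\bf x},{\bf y})$ for which the prediction errs, i.e.\ ${\bf y}\neq\hat{\bf y}$, the true label ${\bf y}$ is by definition one of the labels distinct from $\hat{\bf y}$, so the hypothesis of the theorem gives $L({\bf x,y,w})>T$. In other words, $\{\hat{\bf y}\neq{\bf y}\}\subseteq\{L({\bf x,y,w})>T\}$, or pointwise, $\mathbf{1}[\hat{\bf y}\neq{\bf y}]\le\mathbf{1}[L({\bf x,y,w})>T]$.

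Next I would chain this with the elementary inequality $\mathbf{1}[L({\bf x,y,w})>T]\le L({\bf x,y,w})/T$, which holds because $L\ge 0$ and $T>0$. Taking expectations over $({\bf X},{\bf Y})\sim\mathcal{D}$ then yields
\begin{equation*}
\Pr(\hat{\bf y}\neq{\bf y}\mid{\bf w})=\E_{\mathcal{D}}\big[\mathbf{1}[\hat{\bf y}\neq{\bf y}]\big]\le\frac{1}{T}\E_{\mathcal{D}}[L({\bf x,y,w})],
\end{equation*}
which is exactly the claimed bound. Equivalently, the last two steps are just Markov's inequality applied to the nonnegative random variable $L({\bf X},{\bf Y},{\bf w})$, namely $\Pr(L>T)\le\E[L]/T$, combined with the inclusion above.

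There is no genuinely hard step; the only points needing care are (i) reading the hypothesis with a single threshold $T>0$ that works uniformly (i.e.\ $L({\bf x,y,w})>T$ for all ${\bf x}$ in the support and all ${\bf y}\neq\hat{\bf y}({\bf x})$), so that $T$ is a constant and not sample-dependent, and (ii) handling possible ties in the $\arg\min$ — the strict inequality $L>T$ in the hypothesis already forces the minimizer to be unique, so this is not an issue. The real content is conceptual rather than technical: enforcing a margin/threshold $T$ separating the predicted label's loss from every competing label's loss is precisely what upgrades an expected-loss statement into a misclassification-probability bound, and the proof is the textbook Markov argument.
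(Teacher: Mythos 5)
Your proof is correct and is essentially the paper's own argument: both reduce the claim to the pointwise chain $\mathbf{1}[\hat{\bf y}\neq{\bf y}]\le\mathbf{1}[L>T]\le L/T$ (the paper writes the middle term as a Heaviside function $\mathcal{H}(L-T)$) followed by taking expectations, i.e.\ Markov's inequality for the nonnegative loss. Your explicit remark that nonnegativity of $L$ is what licenses the final inequality is a small point the paper leaves implicit, but the route is the same.
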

\begin{proof}
\begin{eqnarray*}
\Pr(\hat{\bf y}\neq {\bf y}|{\bf w})
&=&\E_{\mathcal{D}}[\textbf{1}(\hat{\bf y}\neq {\bf y})]\\
&\le&\E_{\mathcal{D}}[\mathcal{H}(L({\bf x,y,w})-T)]\\
&\le&\frac{1}{T}\E_{\mathcal{D}}[L({\bf x,y,w})]
\end{eqnarray*}
The function $\textbf{1}(z)$ is the indicator function that is 1
when $z$ is true and 0 for false.  $\mathcal{H}(z)$ is the
Heaviside function that is 1 for $z\ge 0$ and 0 otherwise. The
last inequality comes from the fact that $\mathcal{H}(z-T)\le
\frac{z}{T}$.
\end{proof}

This threshold theorem allows one to discuss the prediction error bounds for any number of outputs with any loss function. For example, for
\emph{logistic regression} (LR), $L = \log(1+e^{-z}) > \log2$ whenever a mistake is made, so the threshold $T$ for LR is $\log2$. In \emph{Adaboost} \citep{zhang01statisticalbehavior}, $L=e^{-z}>1$ whenever it makes a mistake, so $T = 1$. For SSVMs, $L=[1-\max_{{\bf y'} \neq {\bf y}}(z({\bf y,x,w}) - z({\bf y',x,w}))]_+ > 1$ when it makes a mistake, so $T$ is again $1$. Then the prediction errors for all these classifiers are upper bounded by the expected loss divided by the threshold $T$. 

According to this theorem, the goal of all classification tasks is to find the hypothesis that predict with an expected loss as low as possible. 
On the other hand, for LMSBNs, there is a fast inference algorithm whose performance directly dependents on this quantity. The smaller the expected loss, the faster the inference. For both of the above reasons, the log-loss and exponential-loss are unfavorable because they are usually larger than zero even if the model fits the data well. Therefore, we choose the
hinge loss as the loss function for both LMSBNs and LMBMs.

The threshold for LMBMs is given in Remark~\ref{rem:lmbmthreshold}, and the threshold for LMSBNs is given in Remark~\ref{rem:lmsbnthreshold}. For a tight bound, the threshold should be large enough, so for LMBMs, we need to constrain the weights among the output variables. In other words, if the coupling between outputs is stronger than the coupling between an output and an input, then the possibility of overfitting increases. This also explains why the approximate loss of LMBMs contains regularizations for the coupling weights among the output variables. However, for LMSBNs, the threshold is always $1$. Generally speaking, LMSBNs can be expected to generalize better than LMBMs. 
\begin{rem}\label{rem:lmbmthreshold}
For LMBMs, $T = \min_i[\gamma-g({\bf x})\sum_{j:C_j\in\mathcal{C}'}w_j^2]_+ $, for some $g$.
\end{rem}
\begin{proof}
For any $y_i\neq \hat{y_i}$, we have $[1-z_i]_+=[1 - A_0-A_1]_+$,$[1-\hat{z}_i]_+=[1-A_0-A_2]_+$ where $A_0=\sum_{j:f_j=\hat{f}_j}w_jf_j$,\\ $A_1=\sum_{j:f_j\neq\hat{f}_j}w_jf_j$,\\$A_2=\sum_{j:f_j\neq\hat{f}_j}w_j\hat{f}_i$. Since all $\bf y$ takes $\{-1,+1\}$, so $A_2=-A_1$, and $[1-\hat{z}_i]_+=[1-A_0+A_1]_+$.\\
If $A_1<0$, we have $L>[1-z_i]_+>[1-A_0]_+$. Otherwise,
$L>\hat{L}>[1-\hat{z}_i]_+>[1-A_0]_+$. So
$L>[1-\sum_{j:f_j=\hat{f}_j}w_jf_j]_+$. We can further loosen it
to $L>\min_i[1 -g({\bf x})\sum_{j:C_j\in\mathcal{C}'}w_j^2]_+$.\qed
\end{proof}
\begin{rem}\label{rem:lmsbnthreshold}
For LMSBNs, $T = 1$
\end{rem}
\begin{proof}
Pick the first $Y_i$ in the topological order that does not equal the
optimal value, i.e. $y_i\neq \hat{y}_i$ and $\forall y_j\prec y_i,
y_j=\hat{y}_j$. Let $L_i = [1-z_i]_+$ and
$\hat{L_i}=[1-\hat{z}_i]_+$. Since $Y$ takes values $\{-1,1\}$
and only $y_i\neq\hat{y}_i$ in $z_i$, it is easy to verify that
$z_i=-\hat{z}_i$. So, we have $L_i=[1+\hat{z}_i]_+$.  If
$\hat{z}_i>0$, we have $L\ge L_i\ge 1$. Otherwise, $L\ge
\hat{L}\ge \hat{L_i}\ge 1$.\qed
\end{proof}

We assume all data are drawn from the same unknown distribution
$\mathcal{D}$. Since $\mathcal{D}$ is unknown, one can only minimize
the empirical risk rather than the expected risk. A fast convergence
rate of the empirical objective to the expected one was proved
in \citep{Shwartz08} for the single output variable case.  We can extend it to the general structured output case by
providing a structured \emph{Rademacher complexity} bound, as shown in Lemma ~\ref{lem:multiR}. 
\begin{lem}\label{lem:multiR}
Let $\mathcal{F}=\{{\bf x,y}\mapsto L(\bf x,y,w)\}$, \\$\mathcal{F}_i=\{{\bf x,y}\mapsto \sum_{j:Y_i\in C_j}w_jf_j\}$, $\phi(z) = [1 - z]_+$. We have
\begin{equation*}
\E \left[\sup_{h\in \mathcal{F}}\left(\E h - \hat{\E}_N h\right)\right]\le \sum_i \mathcal{R}_N(\phi\circ \mathcal{F}_i)
\end{equation*}
\end{lem}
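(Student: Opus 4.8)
The plan is to peel the uniform-deviation bound for the loss class $\mathcal{F}$ down to a sum of ordinary Rademacher bounds, one per output coordinate, by combining subadditivity of the supremum with the classical symmetrization lemma.

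First I would use the additive structure of the loss: every $h\in\mathcal{F}$ is $h(\mathbf{x},\mathbf{y})=\sum_i \phi(z_i)$ with a \emph{single} shared weight vector $\mathbf{w}$ entering every $z_i=\sum_{j:Y_i\in C_j}w_jf_j$. Therefore
\begin{equation*}
\sup_{h\in\mathcal{F}}\big(\E h-\hat{\E}_N h\big)=\sup_{\mathbf{w}}\sum_i\big(\E[\phi(z_i)]-\hat{\E}_N[\phi(z_i)]\big)\le\sum_i\sup_{g\in\phi\circ\mathcal{F}_i}\big(\E g-\hat{\E}_N g\big),
\end{equation*}
the inequality being just $\sup_{\mathbf{w}}\sum_i a_i(\mathbf{w})\le\sum_i\sup_{\mathbf{w}}a_i(\mathbf{w})$; note that on the right each coordinate is now allowed its own minimizing $\mathbf{w}$, which is precisely why the relevant class is $\phi\circ\mathcal{F}_i$ rather than some coupled object. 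Taking expectation over the training sample and moving it inside the finite sum gives $\E[\sup_{h\in\mathcal{F}}(\E h-\hat{\E}_N h)]\le\sum_i\E[\sup_{g\in\phi\circ\mathcal{F}_i}(\E g-\hat{\E}_N g)]$.

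Second, I would bound each summand by its Rademacher complexity via standard symmetrization: introduce an independent ghost sample $\{(\mathbf{x}'_l,\mathbf{y}'_l)\}_{l=1}^N$, write $\E g=\E[\hat{\E}'_N g]$, pull the ghost expectation outside the supremum by Jensen, and then insert i.i.d.\ Rademacher signs $\sigma_l$ — legitimate because each $g(\mathbf{x}_l,\mathbf{y}_l)-g(\mathbf{x}'_l,\mathbf{y}'_l)$ is symmetric about $0$. This yields $\E[\sup_{g\in\phi\circ\mathcal{F}_i}(\E g-\hat{\E}_N g)]\le\mathcal{R}_N(\phi\circ\mathcal{F}_i)$ with the data-dependent Rademacher complexity normalized as in \citep{Shwartz08}, and summing over $i$ finishes the proof. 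This lemma is more a bookkeeping step than a hard result; the one place that needs attention is matching normalization conventions — a naive symmetrization produces a factor $2$ in front of $\mathcal{R}_N$, which must be the constant already folded into the definition of $\mathcal{R}_N$ used here. The genuinely substantive work — contracting the $1$-Lipschitz hinge $\phi$ away (Ledoux--Talagrand) and then controlling $\mathcal{R}_N(\mathcal{F}_i)$ for the linear map $\mathbf{w}\mapsto z_i$ under the $\ell_2$ constraint — happens in the steps that follow this lemma.
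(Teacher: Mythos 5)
Your proposal is correct and is essentially the paper's own argument: both proofs combine the symmetrization/ghost-sample step with subadditivity of the supremum over the decomposition $L=\sum_i\phi(z_i)$, the only difference being that you split the supremum before symmetrizing while the paper symmetrizes first and then splits. Your explicit remark about the factor-of-$2$ normalization convention for $\mathcal{R}_N$ is a fair observation that applies equally to the paper's final equality.
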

\begin{proof}
\begin{eqnarray*}
&&\E \left[\sup_{h\in \mathcal{F}}\left(\E h - \hat{\E}_N h\right)\right]\\
&\le& \E \left[\sup_{h\in \mathcal{F}}\frac{1}{N}\sum_l \left(h({\bf x}_l',{\bf y}_l')-h({\bf x}_l,{\bf y}_l)\right)\right]\\
&\le&\E \left[\sum_i\sup_{h_i'\in \phi \circ \mathcal{F}_i}\frac{1}{N}\sum_l \left(h_i'({\bf x}_l',{\bf y}_l')-h_i'({\bf x}_l,{\bf y}_l)\right)\right]\\
&=&\sum_i\mathcal{R}_N(\phi\circ \mathcal{F}_i)
\end{eqnarray*}
Here $\mathcal{R}_N$ is the \emph{Rademacher
complexity}~\citep{Bartlett02} of sample size $N$.
See~\citep{Bartlett02} for details on the notation.\qed
\end{proof}
Together with Lemma~\ref{lem:multiR} and Corollary 4 in \citep{Shwartz08}, we can now derive a generalization bound as in
Theorem~\ref{thm:generalizationbound}.

\begin{thm}\label{thm:generalizationbound}
Let $\mathcal{L}({\bf w})=\E_{\mathcal{D}}[L(\bf x,y,w)]$,
${\bf w}_o=\arg\inf_{\bf w}\mathcal{L}({\bf w})$.
Assuming $\sum_jf_j^2<B^2$, for any $\delta>0$, with probability $1-\delta$ over the sample size $N$, if $\lambda=c\frac{B\sqrt{d/\delta}}{\norm{{\bf w}_o}_2\sqrt{N}}$, where $c$ is a constant, we have
\begin{eqnarray*}
\Pr(\hat{\bf y}\neq {\bf y}|\hat{\bf w})&\le& \frac{1}{T}\mathcal{L}(\hat{\bf w})\\
\mathcal{L}(\hat{\bf w})&\le&\mathcal{L}({\bf w}_o)+O\left(B\norm{{\bf w}_o}_2\sqrt{\frac{\log{(d/\delta)}}{N}}\right)
\end{eqnarray*}
\end{thm}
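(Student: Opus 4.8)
The plan is to combine the three ingredients already assembled in the excerpt: the threshold Theorem~\ref{thm:threshold}, the structured Rademacher bound of Lemma~\ref{lem:multiR}, and the fast-rate regularized ERM bound from Corollary~4 of \citep{Shwartz08}. The first inequality, $\Pr(\hat{\bf y}\neq{\bf y}|\hat{\bf w})\le \frac{1}{T}\mathcal{L}(\hat{\bf w})$, is immediate from Theorem~\ref{thm:threshold} applied at $\bf w=\hat{\bf w}$, since Remarks~\ref{rem:lmbmthreshold} and~\ref{rem:lmsbnthreshold} supply a positive threshold $T$ for which the hypothesis $L({\bf x,y,\hat w})>T$ holds on every wrong labeling. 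So all the work is in the second inequality, the oracle bound on $\mathcal{L}(\hat{\bf w})$.

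For that I would first express the excess risk $\mathcal{L}(\hat{\bf w})-\mathcal{L}({\bf w}_o)$ through the regularized empirical objective of Equation~\ref{eqn:learning}. Writing $\hat{\bf w}$ for the minimizer of $\hat{\E}_N L + \lambda\norm{\bf w}_2^2$, the standard decomposition gives
\begin{equation*}
\mathcal{L}(\hat{\bf w}) - \mathcal{L}({\bf w}_o) \le \bigl(\mathcal{L}(\hat{\bf w}) - \hat{\E}_N L(\hat{\bf w})\bigr) + \lambda\bigl(\norm{{\bf w}_o}_2^2 - \norm{\hat{\bf w}}_2^2\bigr) + \bigl(\hat{\E}_N L({\bf w}_o) - \mathcal{L}({\bf w}_o)\bigr),
\end{equation*}
using optimality of $\hat{\bf w}$ to drop the empirical-loss comparison in favor of the regularizer gap. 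The middle term is bounded by $\lambda\norm{{\bf w}_o}_2^2$; the last term is controlled in expectation trivially and with high probability by a concentration argument; the first term is where Lemma~\ref{lem:multiR} enters. By that lemma $\E[\sup_{h\in\mathcal{F}}(\E h - \hat\E_N h)]\le\sum_i\mathcal{R}_N(\phi\circ\mathcal{F}_i)$, and since $\phi(z)=[1-z]_+$ is $1$-Lipschitz, Talagrand's contraction inequality reduces each term to $\mathcal{R}_N(\mathcal{F}_i)$, which for the linear class $\{{\bf x,y}\mapsto\sum_{j:Y_i\in C_j}w_jf_j\}$ under the norm budget $\norm{\bf w}_2\le\norm{{\bf w}_o}_2$ (forced by the regularizer) and the feature bound $\sum_j f_j^2<B^2$ is $O(B\norm{{\bf w}_o}_2/\sqrt{N})$; summing over $i$ gives the stated $B\norm{{\bf w}_o}_2\sqrt{\log(d/\delta)/N}$ form after invoking Corollary~4 of \citep{Shwartz08}, whose statement already packages the union-bound / $\log(d/\delta)$ factor and the $1-\delta$ high-probability conversion. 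Plugging the prescribed $\lambda=c\,B\sqrt{d/\delta}/(\norm{{\bf w}_o}_2\sqrt{N})$ balances the $\lambda\norm{{\bf w}_o}_2^2$ term against the complexity term, yielding the claimed rate.

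The main obstacle, I expect, is handling the dependence between the $K$ summands: the classes $\mathcal{F}_i$ share the same weight vector $\bf w$, so the decomposition $\sup_h\sum_i(\cdots)\le\sum_i\sup_{h_i}(\cdots)$ used in Lemma~\ref{lem:multiR} is lossy, and one must be careful that the resulting $\sum_i\mathcal{R}_N(\phi\circ\mathcal{F}_i)$ does not blow up by a spurious factor of $K$ beyond what is absorbed into $B$ (recall $\sum_j f_j^2<B^2$ is a global bound across all cliques, so the per-$i$ feature norms are already small). A secondary subtlety is that Corollary~4 of \citep{Shwartz08} is stated for the single-output case, so the plan is to apply it with the aggregate loss $L=\sum_i[1-z_i]_+$ treated as one loss function whose Rademacher complexity is the sum bounded above — i.e., to verify that the hypotheses of that corollary (convexity of the regularized objective, strong convexity from $\lambda\norm{\cdot}_2^2$, Lipschitzness of $L$ in $\bf w$) hold for the structured loss, which they do since $L$ is a finite sum of $1$-Lipschitz hinge terms composed with bounded-norm linear maps. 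For LMBMs the only change is that $T$ is the data-dependent quantity of Remark~\ref{rem:lmbmthreshold} rather than $1$, and the extra regularizer $\lambda\eta_0\norm{{\bf w}_{\mathcal{C}'}}_2^2$ only strengthens the norm control, so the same argument applies verbatim.
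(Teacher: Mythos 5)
Your overall route is the one the paper intends: the first inequality is Theorem~\ref{thm:threshold} applied at $\hat{\bf w}$ with the thresholds of Remarks~\ref{rem:lmbmthreshold} and~\ref{rem:lmsbnthreshold}, and the second is obtained by combining Lemma~\ref{lem:multiR} with Corollary~4 of \citep{Shwartz08}. The paper gives no more detail than that one-sentence recipe, so your decomposition is a reasonable reconstruction. However, one step in it would fail as written: the claim that the regularizer forces $\norm{\hat{\bf w}}_2\le\norm{{\bf w}_o}_2$. Optimality of $\hat{\bf w}$ only yields $\lambda\norm{\hat{\bf w}}_2^2\le\hat{\E}_N L({\bf x},{\bf y},{\bf w}_o)+\lambda\norm{{\bf w}_o}_2^2$, i.e.\ a radius of order $\sqrt{\hat{\E}_N L/\lambda}$, and running your uniform-convergence-over-a-ball argument with that radius and the prescribed $\lambda\propto N^{-1/2}$ produces a complexity term of order $N^{-1/4}$, not the claimed $N^{-1/2}$. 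The $N^{-1/2}$ oracle rate comes from the strong convexity of the regularized objective (the stability / fast-rate mechanism packaged inside Corollary~4 of \citep{Shwartz08}), not from a supremum over a $\norm{{\bf w}_o}_2$-ball; Lemma~\ref{lem:multiR} is what licenses transplanting that single-output result to the structured loss $\sum_i[1-z_i]_+$.

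A secondary point: the $d$ inside $\log(d/\delta)$ is not a union-bound artifact supplied by the corollary. As the paper explains in the paragraph following the theorem, $d$ is the maximum clique size, and it enters because the subclasses $\mathcal{F}_i$ in Lemma~\ref{lem:multiR} overlap for LMBMs, so each feature is counted up to $d$ times across $\sum_i\mathcal{R}_N(\phi\circ\mathcal{F}_i)$ while the bound $\sum_j f_j^2<B^2$ counts it once (hence $d=1$ for LMSBNs, where each clique feeds exactly one $z_i$). Your worry about a spurious factor of $K$ is the right instinct, but the resolution is this multiplicity bookkeeping rather than the global feature bound alone.
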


The basic idea of the structured \emph{Rademacher complexity} is to bound the whole functional space by a combination of the \emph{Rademacher complexity} of each subspaces. For LMBMs, a $f_j$ will be shared by all $z_i$ where $Y_i\in C_j$. So the subspaces overlap with each other, and the overall \emph{Rademacher complexity} counts the features multiple times while $B$ counts only once. Therefore the generalization bound is loosened by $\sqrt{d}$, where $d$ is the maximum clique size. The more complicated the graph, the larger the $d$. For LMSBNs, each feature only appears in one subspace, so $d$ is always $1$. Hence the bound for LMSBNs is tighter than for LMBMs. 

Furthermore, the bound given above is better than the PAC-Bayes bound of
SSVMs and is not affected by the inference algorithm. For SSVMs, when there is no cheap exact inference algorithm
available, the PAC-Bayes bound becomes worse due to the extra
degrees of freedom introduced by relaxations \citep{Kulesza2007},
leading to potentially poorer generalization performance. 

\subsection{Learning Algorithm}\label{sec:training}
For LMSBNs, the learning problem defined in
Equation~\ref{eqn:learning} can be decomposed into $K$ independent
optimization problems\footnote{$\lambda$ should be the same, otherwise
Theorem~\ref{thm:generalizationbound} does not hold.}. Each of them
can be solved efficiently by any of the modern fast solvers such as the dual
coordinate descent algorithm \citep{Hsieh2008} (DCD), the primal
stochastic gradient descent algorithm
(PEGASOS) \citep{Shai07,bottou-bousquet-2008} or the exponentiated
gradient descent algorithm \citep{Collins-08}. For LMBMs, the weights are shared in multiple $z_i$, one has to optimize the whole objective simultaneously. Similar to \citep{Hsieh2008}, we give a \emph{dual coordinate descent} based optimization algorithms for LMBMs.

Consider the following primal optimization problem:
\begin{eqnarray*}
  \min_{\bf w,\xi} &&\frac{1}{2}\sum_j\eta_j w_j^2+\frac{1}{\lambda N}\sum_{il}\xi_{il} \\
 \mbox{subject to}&& \sum_{j:Y_i\in C_j}w_jf_{jl}\ge 1 - \xi_{il}\\
   && \xi_{il}\ge 0
\end{eqnarray*}
where $\eta_j=1$ if $w_j$ is not extra regularized; otherwise, $\eta_j=1+\eta_0$. The index $l$ represents each training data. Let $\alpha_{il}$ and $\beta_{il}$ be Lagrange multipliers. Then, we have the
Lagrangian:
\begin{eqnarray*}
    \mathbb{L}({\bf w}, \xi,\alpha,\beta) &=&\frac{1}{2}\sum_j\eta_j
    w_j^2+\frac{1}{\lambda N}\sum_{il}\xi_{il}-\sum_{il}\beta_{il}\xi_{il}-\\
&&\sum_{il}\alpha_{il}(\sum_{j:Y_i\in C_j}w_jf_{jl} - 1 +
    \xi_{il})
\end{eqnarray*}
We optimize $\mathbb{L}$ with respect to $\bf w$ and $\xi$:
\begin{eqnarray*}
  \frac{\partial \mathbb{L}}{\partial w_j} &=& \eta_jw_j - \sum_{l}\sum_{i:Y_i\in C_j}\alpha_{il}f_{jl}=0\\
  \frac{\partial \mathbb{L}}{\partial \xi_{il}} &=&
  \frac{1}{\lambda N}-\alpha_{il}-\beta_{il}=0
\end{eqnarray*}
Substituting for $\bf w$ and $\xi$, we have the dual objective:
\begin{eqnarray*}
 \mathbb{L}_\alpha
&=&\frac{1}{2}\sum_{j,l,l'}\sum_{i:Y_i\in C_j}\sum_{i':Y_{i'}\in C_j}\alpha_{il}\alpha_{i'l'}Q_{jll'}-\sum_{il}\alpha_{il}
\end{eqnarray*}
where $Q_{jll'} = \frac{f_{jl}f_{jl'}}{\eta_j^2}$.
The dual coordinate descent algorithm picks $\alpha_{il}$ one at a
time and optimizes the dual Lagrangian with respect to this variable. 
The resulting algorithm is described in Algorithm 1.
\begin{algorithm}
\label{alg:dcd} \caption{The dual coordinate descent algorithm for
large margin Boltzmann machines}
\begin{algorithmic}[1]
\REQUIRE $\set{f_{jl}},\set{Q_{jll}},\lambda, N$
\ENSURE $\bf w$
\STATE $\alpha\leftarrow 0,{\bf w}\leftarrow 0$\\
\WHILE{$\alpha$ is not optimal}
\FORALL{$\alpha_{il}$} \STATE
$\alpha_{o} \leftarrow \alpha_{il}$\\
\STATE$G =
\sum_{j:Y_i\in C_j}w_jf_{jl}-1$\\
\STATE$PG = \left\{\begin{array}{cc}
\min(G,0) & \alpha_{o} = 0, \\
 \max(G,0) & \alpha_{o} = \frac{1}{\lambda N}, \\
 G & 0 < \alpha_{o}< \frac{1}{\lambda N}\\
\end{array}\right.$\\
\IF{$|PG|\neq 0$} \STATE $\alpha_{il} \leftarrow
\min(\max(\alpha_{o} - \frac{G}{\sum_{j:Y_i\in C_j}Q_{jll}},0),\frac{1}{\lambda N})$\\
\STATE $w_j \leftarrow w_j + (\alpha_{il} -
\alpha_{o})f_{jl}, \forall j:Y_i\in C_j$\\
\ENDIF \ENDFOR\ENDWHILE \STATE {\bf return} $\bf w$
\end{algorithmic}
\end{algorithm}

\subsection{Inference Algorithm}\label{sec:bb}

In this section, we propose a simple and efficient inference algorithm
 (Algorithm 2) to solve the prediction problem in
 Equation~\ref{eqn:predict} for LMSBNs.  According to the topological order
 of the graph, we branch on each $Y_i$, and compute $z_i$ with $\bf x$
 and all of its parents $y_j$. We first try the value of $y_i$ that
 makes $z_i>0$, i.e., the left branch in the algorithm, then the right
 branch with the opposite value of $y_i$. During this search, we keep
 an upper bound initialized to a parameter $S\ge 1$. Whenever the current
 objective is higher than the upper bound, we backtrack to the
 previous variable. The search terminates before $K^S$ states of $\bf
 Y$ have been visited. 
\begin{algorithm}
\caption{The Branch and Bound Algorithm for
Inference}\label{alg:bb} 
\begin{algorithmic}[1]
\REQUIRE $\bf x,w$, $S\ge 1$
\ENSURE $\hat{\bf y},UB$
\STATE $UB = S$,\,$i=0$,\,$\bf U = 0$;
\WHILE{$i>=0$}
   \IF{$i=K$} \IF{$U_K<UB$} \STATE $UB = U_K$,\, $\hat{\bf y} = \bf y$;\ENDIF \STATE $i=i-1$;
   \ELSE
      \IF {Left branch has not been tried}
         \STATE $y_i= \arg\max_{y_i}z_i$,\, $U_{i+1}= U_i + [1-z_i]_+$;
      \ELSIF {Right branch has not been tried}
         \STATE $y_i= -y_i$,\,$U_{i+1}=U_i+[1+z_i]_+$;
      \ENDIF
      \IF{$U_{i+1}\ge UB$ or both branches have been tried}
         \STATE $i=i-1$;
      \ELSE
         \STATE $i=i+1$;
      \ENDIF
   \ENDIF
\ENDWHILE
\end{algorithmic}
\end{algorithm}
The following theorem shows that with a high probability, the above
 algorithm computes the optimal values in polynomial time:
\begin{thm}
\label{thm:timecomplexity}
For any $S\ge 1$, the BB algorithm reaches the optimal
values before $O(K^S)$ states are visited with a probability at least
$1-\frac{1}{S}\mathcal{L}({\bf w})$.
\end{thm}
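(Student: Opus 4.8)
The plan is to split Theorem~\ref{thm:timecomplexity} into a deterministic part --- the algorithm never descends to more than $O(K^{S})$ complete assignments of $\mathbf Y$ --- and a probabilistic part --- with probability at least $1-\frac1S\mathcal L(\mathbf w)$ the optimal assignment $\hat{\mathbf y}$ is one of the complete assignments the algorithm records, so the search ends having found the optimum. Combining the two gives the statement, since the $O(K^{S})$ bound holds unconditionally.

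For the deterministic part I would first record the elementary identity that flipping one variable $Y_i$ negates $z_i$: because $z_i=\sum_{j:Y_i\in C_j}w_jf_j$ and every feature $f_j$ is a product over a clique containing $Y_i$, replacing $y_i$ by $-y_i$ multiplies each $f_j$, hence $z_i$, by $-1$. Therefore if the ``left'' branch at level $i$ takes the greedy value, so $z_i=m_i\ge 0$ there, the ``right'' branch has $z_i=-m_i$ and increases the running objective $U$ by $[1+m_i]_+=1+m_i\ge 1$. Since $U$ is non-decreasing along a root-to-leaf path, the pruning test descends past a node only while $U<UB$, and $UB\le S$ at all times (it starts at $S$ and only ever decreases), any complete assignment the search actually reaches has at most $\lfloor S\rfloor$ ``right'' branches on its path. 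Moreover the greedy value at each level is a deterministic function of the prefix (the topological order fixes the parents, and $\mathbf x,\mathbf w$ are fixed), so a visited complete assignment is determined by the subset of levels at which a ``right'' branch was used; hence the number of such assignments is at most $\sum_{k=0}^{\lfloor S\rfloor}\binom{K}{k}=O(K^{\lfloor S\rfloor})=O(K^{S})$.

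For the probabilistic part I would define the event $\mathcal E=\{\,L(\mathbf x,\hat{\mathbf y},\mathbf w)<S\,\}$ and show that on $\mathcal E$ the algorithm returns an optimal assignment. The key monotone invariant is $UB\ge L(\mathbf x,\hat{\mathbf y},\mathbf w)$ throughout, because $UB$ is either $S$ or the loss of a complete assignment already recorded, and $\hat{\mathbf y}$ minimizes the loss. I would then track how far the DFS descends along the root-to-leaf path of $\hat{\mathbf y}$: every prefix objective on that path is at most $L(\mathbf x,\hat{\mathbf y},\mathbf w)$, so either the leaf is reached, where $U_K=L(\mathbf x,\hat{\mathbf y},\mathbf w)<S$ and, since $U_K\le UB$, this optimal value is recorded; or the path is pruned at some level, where the prefix objective is simultaneously $\ge UB\ge L(\mathbf x,\hat{\mathbf y},\mathbf w)$ and $\le L(\mathbf x,\hat{\mathbf y},\mathbf w)$, forcing $UB=L(\mathbf x,\hat{\mathbf y},\mathbf w)<S$, which can happen only if an optimal complete assignment has already been recorded. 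Either way the search terminates with the optimum in hand.

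Finally I would bound $\Pr(\mathcal E^{c})$: since $\hat{\mathbf y}=\arg\min_{\mathbf y}L(\mathbf x,\mathbf y,\mathbf w)$, we have $L(\mathbf x,\hat{\mathbf y},\mathbf w)\le L(\mathbf x,\mathbf y,\mathbf w)$ for the true label $\mathbf y$, so $\mathcal E^{c}\subseteq\{L(\mathbf x,\mathbf y,\mathbf w)\ge S\}$, and Markov's inequality on the non-negative loss --- equivalently Theorem~\ref{thm:threshold} with $T=S$ --- gives $\Pr(\mathcal E^{c})\le\frac1S\E_{\mathcal D}[L(\mathbf x,\mathbf y,\mathbf w)]=\frac1S\mathcal L(\mathbf w)$. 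Combining the three pieces yields the theorem. The step I expect to be the main obstacle is the deterministic count: it requires nailing down that every ``right''-branch contributes loss at least $1$ (the sign-flip identity for $z_i$) and that the bound $UB\le S$ is maintained, so that a visited complete assignment is genuinely encoded by a size-$\le\lfloor S\rfloor$ subset of the $K$ coordinates; the probabilistic half is then immediate from Markov.
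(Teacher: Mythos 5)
Your proposal is correct and follows essentially the same route as the paper's proof: each right branch adds at least $1$ to the objective (via the sign-flip of $z_i$), so bounded loss forces at most $\lfloor S\rfloor$ right branches, the visited leaves are counted by $\sum_k\binom{K}{k}=O(K^S)$, and Markov's inequality applied to the true-label loss (which upper-bounds the optimal loss) gives the probability. Your write-up is more careful than the paper's --- in particular you make explicit the invariant $UB\le S$, the injection from visited leaves to subsets of right-branch levels, and the argument that the optimal path is never pruned before an optimum is recorded --- but the decomposition and all key ideas coincide.
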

\begin{proof}
During the search, if we branch on the right, the hinge loss
$[1+z_i]_+$ is greater than 1. So, for a given $\bf x$, if the true
objective $L<S$, the optimal objective $\hat{L}<L<S$ as well, and the
optimal path contains at most $S$ right branches. Since the BB
algorithm always searches the left branch first, the optimal path will
be reached before $\sum_{i=0}^{S-1}\left(
\begin{array}{c}
i\\
K
\end{array}\right)\le O(K^S)
$ states have been searched. According to the \emph{Markov inequality}, $\Pr(L<S)= 1-\Pr(L>S)\ge 1-\frac{1}{S}\mathcal{L}(\bf w)$.
\end{proof}

The BB algorithm adjusts the search tree according to the model
weights. Through training, optimal paths are condensed to the low
energy side, i.e., the left side of the search tree with a high
probability. This probability is directly related to the expected
loss with respect to the given data distribution. We therefore label
the BB algorithm a \emph{data-dependent} inference algorithm. Most
popular inference algorithms for exact or approximate inference depend
on graph complexity: the more complicated the graph, the slower the
inference. This trade-off diminishes the applicability of these
algorithms and presents researchers with the difficult problem of
selecting a (possibly sub-optimal) graph structure that balances the
accuracy and the efficiency. The BB algorithm for LMSBNs circumvents
this trade-off and allows arbitrary complicated graphs without
sacrificing computational efficiency. In fact, if a particular
complicated graph yields a smaller expected loss, the BB algorithm in
turn runs even faster.

It is well-known that for NP-hard problems, there may be many instances
that can be solved efficiently. The area of \emph{speedup learning}
focuses on learning good heuristics to speedup problem solvers. The
approach presented here can be regarded as a novel method for
\emph{speedup learning} \citep{Tadepalli96a} and demonstrates that the
experience gained during training can speedup a problem solver
significantly.

The BB algorithm is specifically designed for LMSBNs, a directed graphical 
model. For undirected models, the BB algorithm does not guarantee a
polynomial time complexity with a high probability. Indeed, we observe
an exponential time complexity when it is applied to LMBMs. For the undirected models including SSVMs and LMBMs, we implement a \emph{convex relaxation-based linear programming} (LP) \citep{Wainwright05a}. Note that although LMBMs don't have a fast inference algorithm, unlike SSVMs, the learning is not affected by the inference algorithm. In the experiments section, we will show that LMBMs outperforms SSVMs. 

The BB algorithm differs from other search-based decoding
algorithms, e.g., beam search and best first search \citep{Abdou2004}, in several
aspects. First, those search algorithms typically prune the supports
of maximum cliques that can grow exponentially. On one hand, the pruning
can lead to misclassification quickly if backtracking is not implemented.
On the other hand, the number of remaining states might still be large
so that the inference is still slow.  Furthermore, even if a backtracking
procedure is implemented, unlike the BB algorithm for LMSBNs, there
are still no guaranteed heuristics that can prune the states efficiently
and correctly.

To demonstrate the efficiency and the data dependency property, we run
the algorithms on the test data of \textbf{RCV1-V2} (a text categorization dataset) with a trained
model and a random untrained model. The running times are collected by varying
the number of output variables. The CPU time is measured on a 2.8Ghz
Pentium4 desktop computer.
\begin{figure}[!t]
\centering \includegraphics[width=0.8\linewidth]{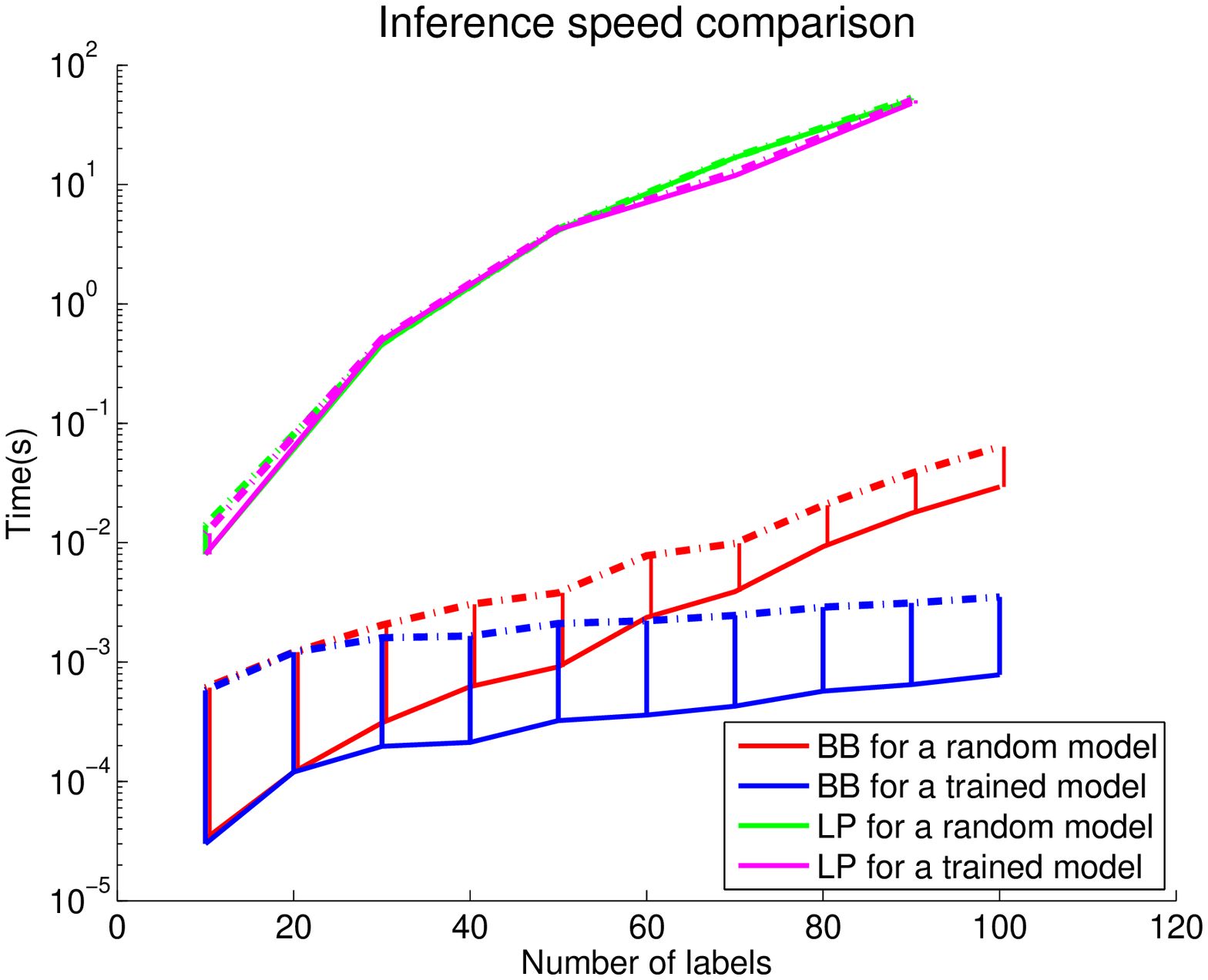}
  \includegraphics[width=0.8\linewidth]{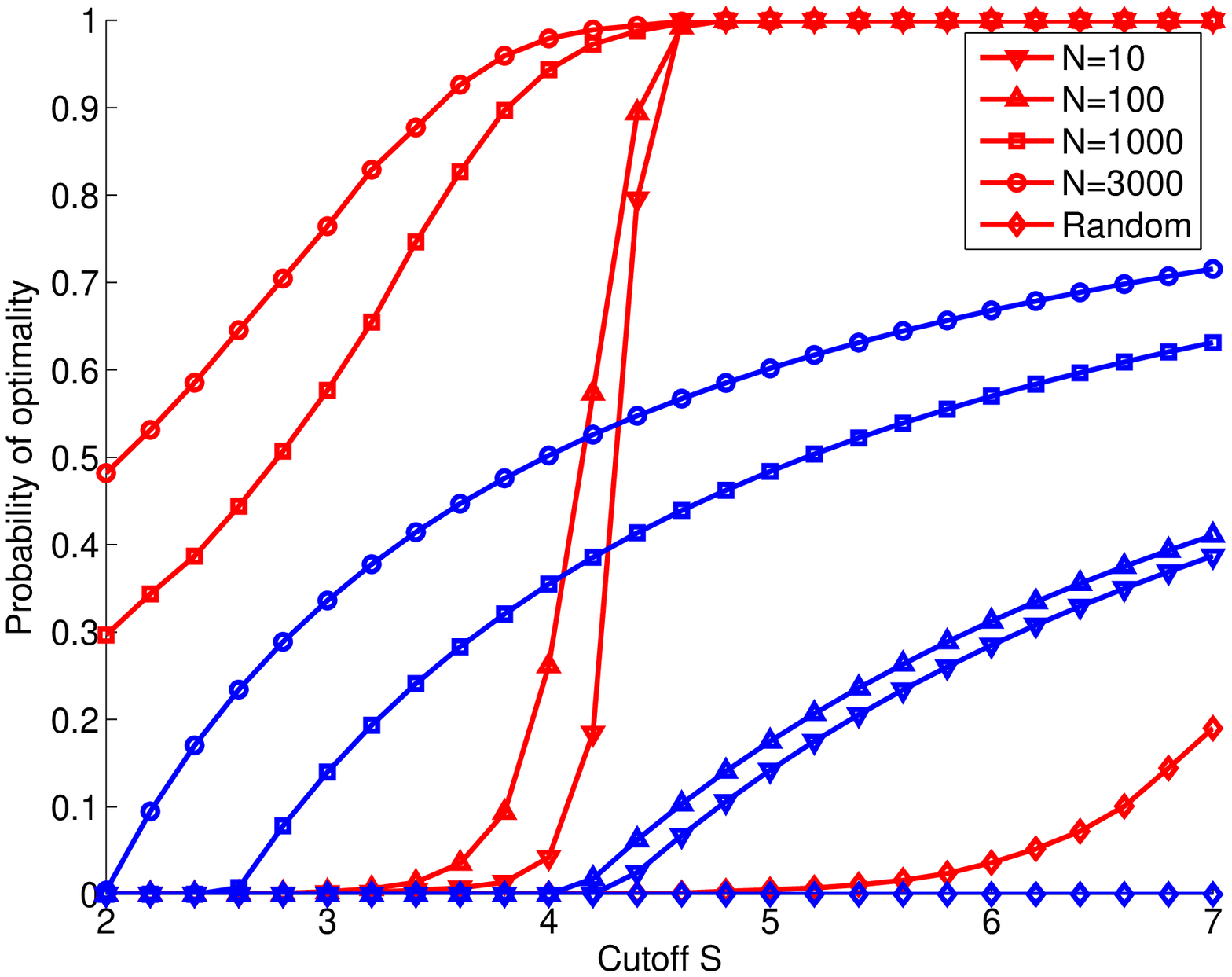}
  \caption{Upper graph. Running time comparisons of LP and BB
  algorithms on the test dataset of \textbf{RCV1-V2}. The dashed lines
  are 1 standard deviation above the mean. The time axis is
  log-scaled. Lower graph. Accuracy and data distribution dependency. The more the training data, the more accurate and faster the prediction. The
  corresponding estimated theoretical lower bounds are plotted in blue.}\label{fig:inf_spd}
\end{figure}

The upper graph in Figure~\ref{fig:inf_spd} demonstrates that the BB
algorithm performs several orders of magnitude faster than
LP\footnote{The speed measurement of LP is comparable to Finley et
al. \citep{Finley/Joachims/07a}. According to their experiments,
\emph{graph cuts} and \emph{loopy belief propagation} can perform
10-100 times faster, but are still much slower than BB.}. In this
experiment, $S$ is set to a very large value such that the solution
from BB is guaranteed to be the optimal solution. The running time of
LP with respect to the number of output variables does not vary from a
trained model to a random untrained model, but the running time of BB
changes significantly. For the random untrained model, the BB algorithm
demonstrates an exponential time complexity with respect to the number
of output variables. However, after training, the running time of the BB
algorithm scales up much more slowly.

This observation underscores the data distribution dependent property
of BB, i.e., the better the model fits the data, the faster BB
performs. We illustrate this property further by a second
experiment. In this experiment, the probability of the BB algorithm
reaching the optimal values is plotted by varying the cutoff threshold
$S$. According to Theorem~\ref{thm:timecomplexity}, $S$ reflects the
running time overhead for the BB algorithm. We compare this curve for
several models, namely, a random model and models trained with 10,
100, 1000, and 3000 training instances respectively. The lower graph
in Figure~\ref{fig:inf_spd} shows a significant improvement for the
trained models over the random untrained model. Moreover, with more
and more training instances, more and more test instances can be
predicted exactly and quickly. In the same figure, we also plot the
corresponding theoretical lower bounds estimated from the testing
dataset (blue lines). The lower graph of Figure~\ref{fig:inf_spd}
verifies Theorem~\ref{thm:timecomplexity} empirically.

Due to the fast and accurate inference algorithm for LMSBNs, we can start with
the most complicated graph structure, i.e., a fully connected
model. The linear form of $z_i$ can be generalized to high order features. Moreover, the kernel trick can be applied to augment the modeling power. The only thing one needs to concern is to minimize the expected loss as much as possible because the small expected loss guarantees not only a high prediction accuracy but also a fast inference.

\section{Related Models}\label{sec:relm} 
Most \emph{maximum margin estimated} structured prediction models, e.g.,
SSVMs \citep{Tsochantaridis04}, \emph{maximum margin Markov networks}
(M3Ns) \citep{Taskar03}, \emph{Maximum margin Bayesian networks} (M2BNs) \citep{Guo2005} and \emph{conditional graphical models}
(CGMs) \citep{zoubin07} adopt a min-max formulation as shown below:
\begin{eqnarray}
\min_{\bf w}&&\frac{1}{2}\norm{\bf w}^2+\frac{1}{\lambda N}\sum_l[\Delta({\bf y}_l, {\bf y}) - m({\bf x}_l,{\bf y}_l;\bf w)]_+\label{eqn:ssvm}\\
m({\bf x}_l,{\bf y}_l;{\bf w})&=& \max_{{\bf y}\neq {\bf y}_l}\Psi({\bf x}_l,{\bf y}_l;{\bf w})-\Psi({\bf x}_l,{\bf y};{\bf w})\nonumber
\end{eqnarray}
where $\Psi$ is the compatibility function derived from a probabilistic model, and $m$ is the margin function.

The embedded \emph{maximization} operation potentially induces an
exponential number of constraints. This exponential number of
constraints makes optimization intractable. In M2BNs, the local normalization constraints makes the problem even harder. SSVMs utilize a cutting
plane algorithm \citep{Joachims08a} to select only a small set of
constraints. M3Ns directly treat the dual variables as the
decomposable pseudo-marginals. When the undirected graph is of low
tree-width, both SSVMs and M3Ns are computationally efficient and
generalize well. However, for high tree-width, approximate inference
has to be used and both the computational complexity and the sample
complexity increase significantly \citep{Kulesza2007,Finley/Joachims/07a}.

CGMs decompose the single hinge loss into a summation of several hinge
losses, each corresponding to one feature function, such that the
exponential number of combinations is greatly reduced. The decomposition from one hinge loss to multiple hinge loss is similar to LMBMs and LMSBNs. However, CGMs decompose to each feature function. For real problems, not every feature function could be compatible to the data, which leads to a large and trivial upper bound. Therefore, the performance can not be guaranteed. 

The \emph{large margin estimation} by the threshold theorem ~\ref{thm:threshold} generalizes the \emph{maximum margin estimation} approach. As long as the loss function satisfies the threshold theorem, there is a margin function implicitly defined such that minimizing the expected loss maximizes the margins. The traditional log-loss based models, e.g., CRFs and MEMMs, can be discussed under the \emph{large margin estimation} framework, but the thresholds are possibly small so that the upper bounds become trivial. This suggests that \emph{large margin estimated} models could generalize better than \emph{maximum likelihood estimated} models.

For problems like semantic annotation, a low-treewidth graph usually is insufficient to
represent the knowledge about the relationships among the labels. The
example in Figure~\ref{fig:annotation} illustrates the motivation for
a high-treewidth graph.  All of the models discussed above lack a fast and accurate inference algorithm for high-treewidth graphs, and are subject to the trade-off between the treewidth and computational efficiency.

To speed up inference for a high-treewidth graphical models, one can use mixture models to represent probabilities. For example, MoP-MEMMs \citep{Rosenberg07} extend MEMMs to address long-range dependencies
and represent the conditional probability by a mixture model. \citeauthor{Wainwright03treereweighted} uses a mixture of trees to approximate a \emph{Markov random fields}. Both demonstrate performance gains but one still has to improve inference speed by restricting the number of mixtures.

Another line of research for high-treewidth graphical models uses \emph{arithmetic circuits} (AC) \citep{Darwiche00adifferential} to represent the Bayesian networks. The AC inference is linear in the circuit size. As long as the circuit size is low, the inference is fast. But learning the optimal AC is an NP-hard problem. Similarly, one has to improve inference speed by penalizing the circuit size \citep{Lowd08}.

The \emph{search based structured prediction} (SEARN) \citep{daume09searn} takes a different approach than \emph{probabilistic graphical models} to handle the high tree-width graphs. It solves the structured prediction by making decisions sequentially. The later classifier can take all the earlier decisions as inputs, which is similar to LMSBNs. In fact, the inference can be considered as the initial decision of the BB algorithm. The expected errors caused by this naive inference could be very high. SEARN implements an averaging approach to reduce the expected errors. It trains a set of sequential classifiers for each iteration and outputs the prediction by averaging the decisions made over all iterations. The earlier decisions will be fed into later classifiers, so the later classifiers possibly make fewer mistakes. By averaging over iterations, the expected loss are reduced thereafter. Roughly speaking, the prediction errors will be bounded by this averaged expected loss\footnote{The expectation is over the unknown data distribution, while the averaging is over the iterations.} multiplied by $\log K$\footnote{Suppose that the initial policy can make perfect predictions.}. Compared to the bounds of LMBMs and LMSBNs, where the prediction errors are bounded by the minimum expected loss divided by the threshold $T$, the generalization bound of SEARN is rather loose. Furthermore, according to \citep{daume09searn}, one needs a large number of iterations to reach that bound which slows down the inference. Therefore, one still has to limit the number of iterations for a faster inference, which might sacrifice the prediction accuracy.

Unlike all the above approaches, LMSBNs possess a very interesting property  that one does not have any constraints on the modeling power. The smaller the expected loss, the faster the inference. Usually, one obtains a smaller expected loss by using a more complicated graph. This property leads to a novel approach for structured prediction with high tree-width graphs.

\section{Experiments}\label{sec:experiments}
The performance of LMSBNs was tested on a scene annotation problem
based on the \textbf{Scene} dataset \citep{Boutell04}. The dataset contains
1211 training instances and 1196 test instances. Each image is
represented by a 294 dimensional color profile feature vector (based
on a CIE LUV-like color space). The output can be any combination of 6
possible scene classes (beach, sunset, fall foliage, field,
urban, and mountain).

We compare a fully connected LMSBN with three other methods:
\emph{binary classifiers} (BCs), SSVMs \citep{Finley/Joachims/07a},
\emph{threshold selected binary classifiers} (TSBCs) \citep{Fan07}. 
BCs train one classifier for each label and
predict independently. For
SSVMs, we follow \citep{Finley/Joachims/07a} to implement a fully
connected undirected model with binary features. We implement a
\emph{convex relaxation}-based linear programming algorithm for
inference, since in both  \citep{Finley/Joachims/07a}
and \citep{Kulesza2007}, the \emph{convex relaxation}-based approximate
inference algorithm was shown to outperform other approximate
inference algorithms such as \emph{loopy belief propagation} and
\emph{graph cuts} \citep{Kolmogorov02}. TSBCs iteratively tune the optimal decision
threshold for each classifier to increase the overall performance with
respect to a certain measure, e.g., \emph{exact match ratio} and
\emph{F-scores}. Many labels in the multi-label datasets are highly
unbalanced, leading to classifiers that are biased. TSBCs can
effectively adjust the classifier's precision and recall to achieve
state-of-the-art performance. In our comparisons, we borrow the best
results from \citep{Fan07} directly.

We implemented two BCs, a linear BC
(BCl) and a kernelized BC (BCk), and three LMSBNs: (1) LMSBNlo is
trained with default order, i.e., ascending along the label indices;
(2) LMSBNlf is trained with the order selected according to the
\emph{F-scores} of the BC. We sort the variables according to their
\emph{F-scores} of the BC. The higher the \emph{F-score}, the smaller
the index in the order; (3) LMSBNkf is a kernelized model with the
same order as LMSBNlf. We also implemented two SSVMs: (1) SSVMhmm is
trained by using a first-order Markov chain. It is different from the 
$SSVM^{hmm}$ package that does not consider all inputs $\bf X$
for each $Y_i$. The inference algorithm for SSVMhmm is the Viterbi
algorithm; (2) SSVMfull is trained by using a fully connected graph.

We consider three categories of
performance measures. The first consists of instance-based measures and includes  the \emph{exact match ratio} (E) (Equation \ref{eqn:exact}) and instance-based
\emph{F-score} (Fsam) (Equation \ref{eqn:f}). The second consists of label-based measures and includes the \emph{Hamming loss} (H) (Equation \ref{eqn:hamm}) and the \emph{macro-F score}
(Fmac) (Equation \ref{eqn:fma}). The last is a mixed measure, the \emph{micro-F score}
(Fmic) (Equation \ref{eqn:fmi}). Fsam calculates the \emph{F-score} for each instance, and
averages over all instances. Fmac calculates the \emph{F-score} for
each label, and averages over all labels. Fmic calculates the
\emph{F-score} for the entire dataset. 

\begin{eqnarray}
E&=&\frac{1}{N}\sum_l\textbf{1}({\bf y}_l= {\bf \hat{y}}_l)\label{eqn:exact}\\
H&=&\frac{1}{NK}\sum_{li}\textbf{1}(y_{li}\neq \hat{y}_{li})\label{eqn:hamm}\\
Fsam&=&\frac{1}{N}\sum_{l}\frac{2\sum_i\textbf{1}(y_{li}=\hat{y}_{li}=1)}{\sum_i(\textbf{1}(y_{li}=1)+\textbf{1}(\hat{y}_{li}=1))}\label{eqn:f}\\
Fmac &=&\frac{1}{K}\sum_{i}\frac{2\sum_l\textbf{1}(y_{li}=\hat{y}_{li}=1)}{\sum_l(\textbf{1}(y_{li}=1)+\textbf{1}(\hat{y}_{li}=1))}\label{eqn:fma}\\
Fmic &=&\frac{2\sum_{il}\textbf{1}(y_{li}=\hat{y}_{li}=1)}{\sum_{il}(\textbf{1}(y_{li}=1)+\textbf{1}(\hat{y}_{li}=1))}\label{eqn:fmi}\\
\end{eqnarray}

The instance-based measure is
more informative if the correct prediction of co-occurrences of labels
is important; the label-based measure is more informative if the
correct prediction of each label is deemed important.

\begin{figure*}%
\centering
\includegraphics[width=0.45\linewidth]{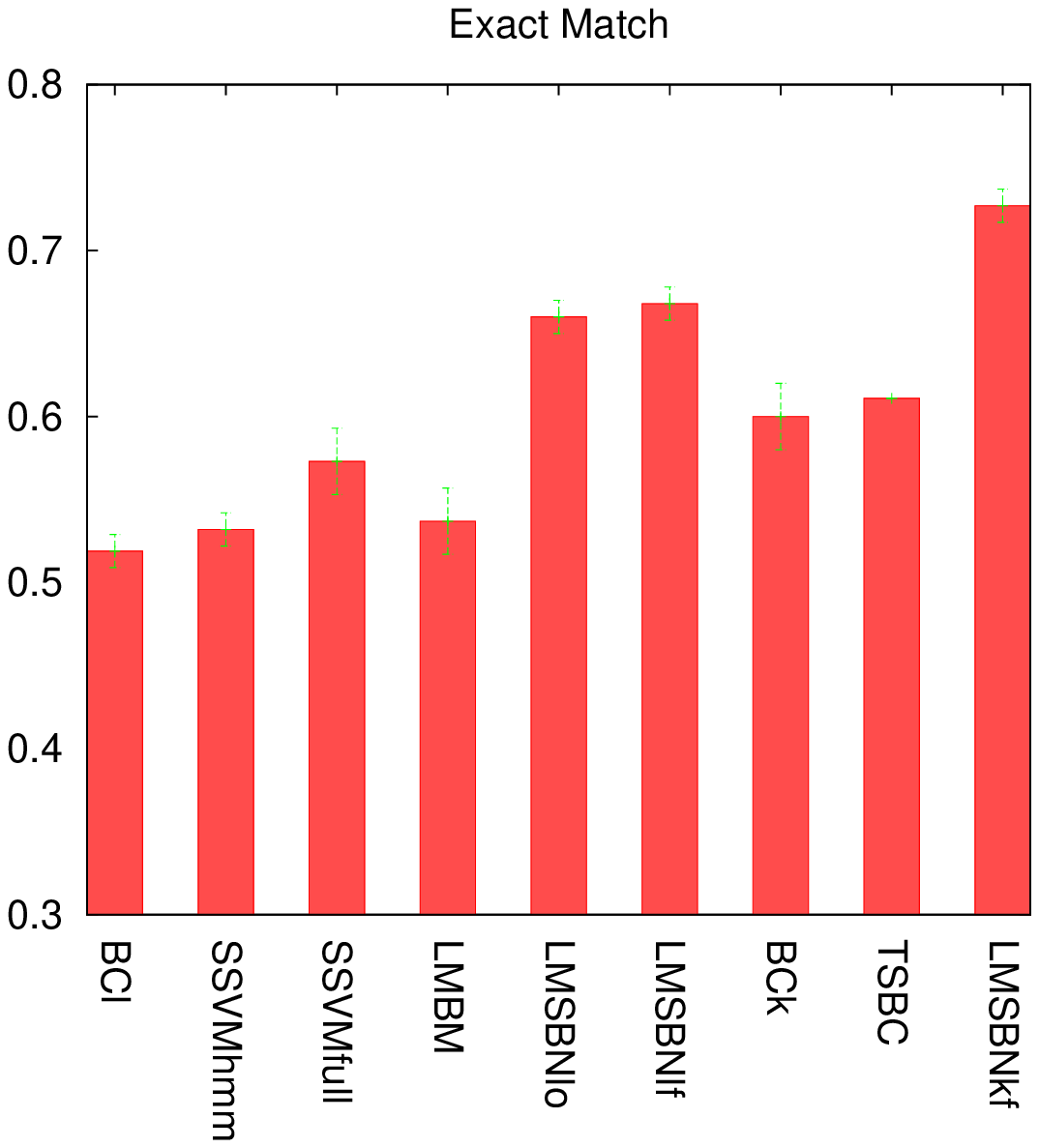}
\includegraphics[width=0.45\linewidth]{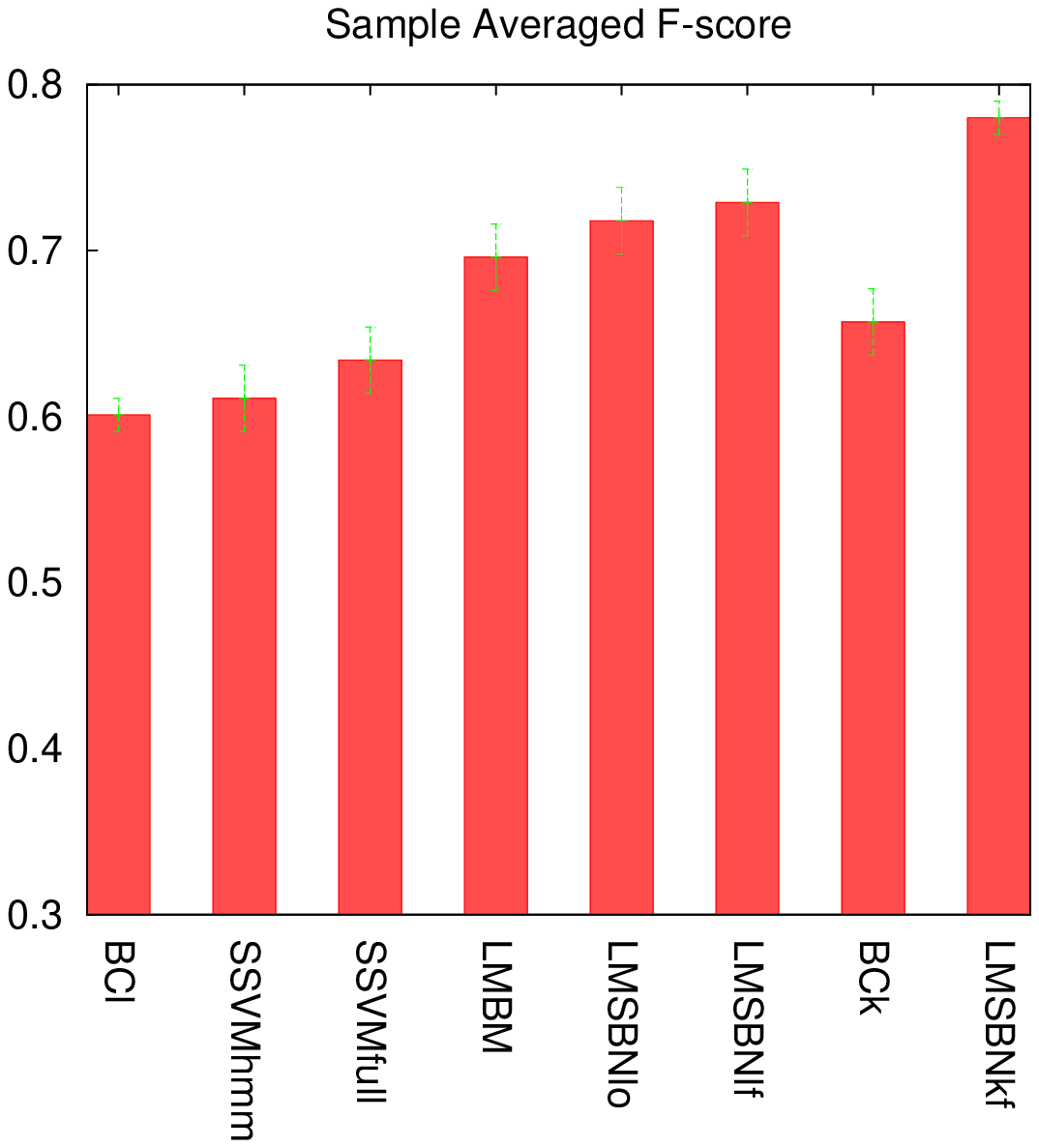}\\
\includegraphics[width=0.45\linewidth]{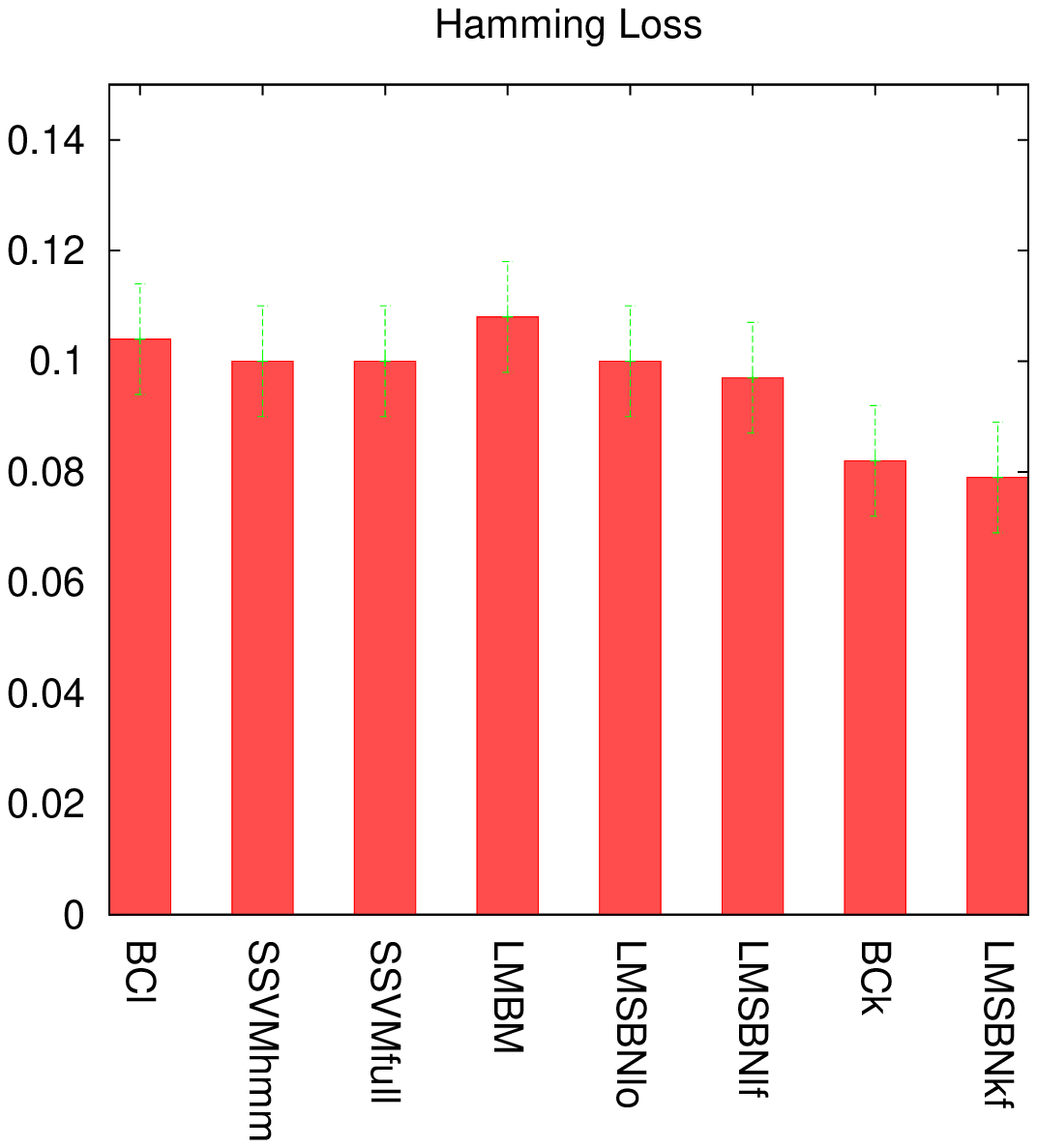}
\includegraphics[width=0.45\linewidth]{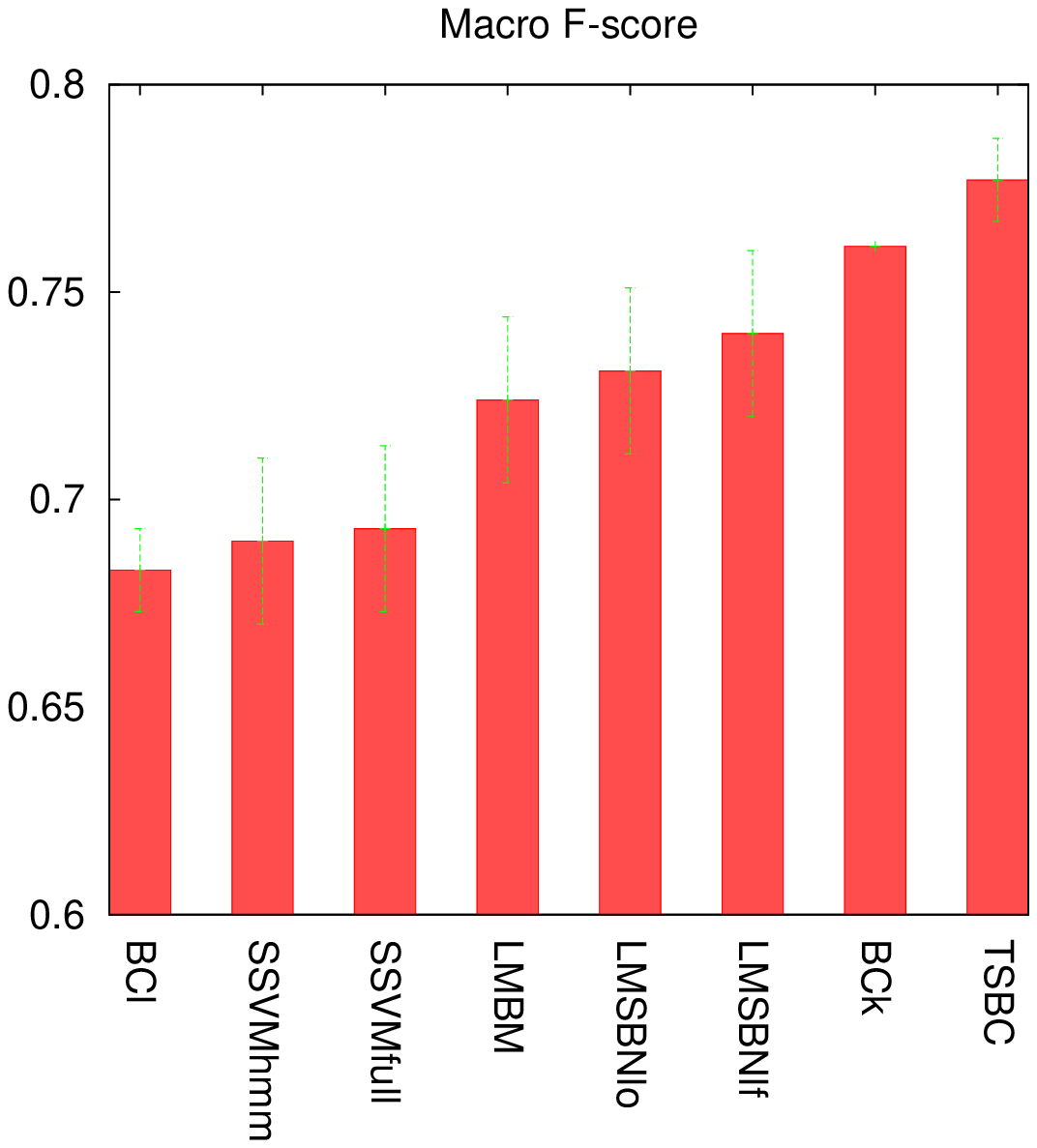}\\
\includegraphics[width=0.45\linewidth]{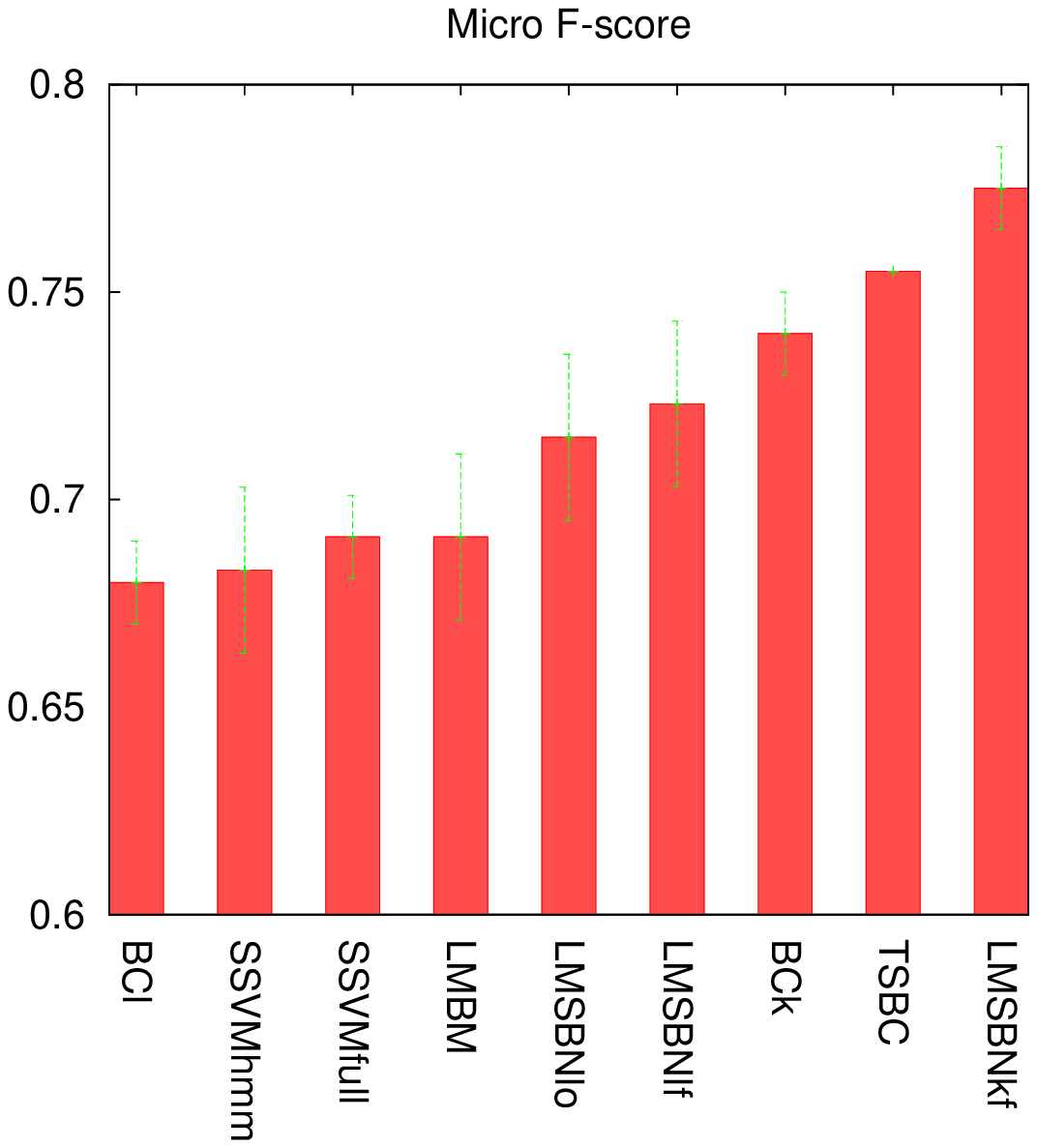}
\caption{Performance Comparisons. Note: For Hamming loss, the smaller the value, the better the performance.}%
\label{fig:perfs}%
\end{figure*}

The results are shown in Figure~\ref{fig:perfs}. LMSBNkf consistently
performs the best on all measures. Even the LMSBN models
without kernels outperform TSBC on instance-based measures. 

SSVMhmm performs better than the BCl, but worse than the SSVMfull as expected. The inference speed of BCl is
faster than SSVMhmm, which in turn is faster than SSVMfull. This
demonstrates the trade-off between modeling power and efficiency. 

With the help of kernels, LMSBNkf further outperforms the
TSBC on all measures. LMSBNs as proposed in this paper are geared
towards minimizing 0-1 errors. Threshold tuning is particularly effective in the case of
highly unbalanced labels. An interesting line of research is combining LMSBNs
with threshold tuning to further improve the performance. 

\section{Conclusions}
This paper proposes the use of large margin graphical models for fast
structured prediction in images with complicated graph structures. A major advantage of the proposed approach is the
existence of fast training and inference algorithms, which open the
door to tackling very large-scale image annotation problems. Unlike
previous inference algorithms for structured prediction, the proposed
BB inference algorithm does not sacrifice representational power for
speed, thereby allowing complicated graph structures to be
modeled. Such complicated graph structures are essential for accurate
semantic modeling and labeling of images. Our experimental results
demonstrate that the new approach outperforms current state-of-the-art
approaches. Future research will focus on applying the framework to
annotating parts of images with their spatial relationships, and
enhancing the representational power of the model by introducing
hidden variables.

\bibliographystyle{spbasic}
\bibliography{lmbm}

\begin{thebibliography}{35}
\providecommand{\natexlab}[1]{#1}
\providecommand{\url}[1]{{#1}}
\providecommand{\urlprefix}{URL }
\expandafter\ifx\csname urlstyle\endcsname\relax
  \providecommand{\doi}[1]{DOI~\discretionary{}{}{}#1}\else
  \providecommand{\doi}{DOI~\discretionary{}{}{}\begingroup
  \urlstyle{rm}\Url}\fi
\providecommand{\eprint}[2][]{\url{#2}}

\bibitem[{Abdou and Scordilis(2004)}]{Abdou2004}
Abdou S, Scordilis MS (2004) Beam search pruning in speech recognition using a
  posterior probability-based confidence measure. Speech Communication
  42(3-4):409 -- 428

\bibitem[{Bartlett and Mendelson(2003)}]{Bartlett02}
Bartlett PL, Mendelson S (2003) Rademacher and gaussian complexities: risk
  bounds and structural results. J Mach Learn Res 3:463--482

\bibitem[{Bengio et~al(2007)Bengio, Lamblin, Popovici, Larochelle, Montr¨¦al,
  and Qu¨¦bec}]{Bengio07greedylayer-wise}
Bengio Y, Lamblin P, Popovici D, Larochelle H, Montr¨¦al UD, Qu¨¦bec M (2007)
  Greedy layer-wise training of deep networks. In: In NIPS, MIT Press

\bibitem[{Bottou and Bousquet(2008)}]{bottou-bousquet-2008}
Bottou L, Bousquet O (2008) The tradeoffs of large scale learning. In: Platt J,
  Koller D, Singer Y, Roweis S (eds) Advances in Neural Information Processing
  Systems, vol~20, NIPS Foundation, pp 161--168

\bibitem[{Boutell et~al(2004)Boutell, Luo, Shen, and Brown}]{Boutell04}
Boutell MR, Luo J, Shen X, Brown CM (2004) Learning multi-label scene
  classification. Pattern Recognition 37:1757--1771

\bibitem[{Carneiro et~al(2007)Carneiro, Chan, Moreno, and
  Vasconcelos}]{Carneiro07}
Carneiro G, Chan A, Moreno P, Vasconcelos N (2007) Supervised learning of
  semantic classes for image annotation and retrieval. IEEE Transactions on
  Pattern Analysis and Machine Intelligence 29(3):394--410

\bibitem[{Collins et~al(2008)Collins, Globerson, Koo, Carreras, and
  Bartlett}]{Collins-08}
Collins M, Globerson A, Koo T, Carreras X, Bartlett PL (2008) Exponentiated
  gradient algorithms for conditional random fields and max-margin {M}arkov
  networks. Journal of Machine Learning Research

\bibitem[{Darwiche(2000)}]{Darwiche00adifferential}
Darwiche A (2000) A differential approach to inference in bayesian networks.
  In: Journal of the ACM, pp 123--132

\bibitem[{{Daum\'{e} III} et~al(2009){Daum\'{e} III}, Langford, and
  Marcu}]{daume09searn}
{Daum\'{e} III} H, Langford J, Marcu D (2009) Search-based structured
  prediction. Machine Learning Journal

\bibitem[{Doucet et~al(2000)Doucet, Freitas, Murphy, and Russell}]{Doucet00}
Doucet A, Freitas Nd, Murphy KP, Russell SJ (2000) Rao-blackwellised particle
  filtering for dynamic bayesian networks. In: UAI '00: Proceedings of the 16th
  Conference on Uncertainty in Artificial Intelligence, Morgan Kaufmann
  Publishers Inc., San Francisco, CA, USA, pp 176--183

\bibitem[{Fan and Lin(2007)}]{Fan07}
Fan RE, Lin CJ (2007) A study on threshold selection for multi-label
  classification. Tech. rep., National Taiwan University

\bibitem[{Finley and Joachims(2008)}]{Finley/Joachims/07a}
Finley T, Joachims T (2008) Training structural {SVMs} when exact inference is
  intractable. In: ICML '08: Proceedings of the 25th international conference
  on Machine learning, ACM, New York, NY, USA, pp 304--311

\bibitem[{Guo et~al(2005)Guo, Wilkinson, and Schuurmans}]{Guo2005}
Guo Y, Wilkinson D, Schuurmans D (2005) Maximum margin {B}ayesian networks. In:
  Uncertainty in Artificial Inteliigence

\bibitem[{Hinton and Sejnowski(1983)}]{Hinton83}
Hinton GE, Sejnowski TJ (1983) Optimal perceptual inference. In: CVPR,
  Washington DC, pp 448--53

\bibitem[{Hsieh et~al(2008)Hsieh, Chang, Lin, Keerthi, and
  Sundararajan}]{Hsieh2008}
Hsieh CJ, Chang KW, Lin CJ, Keerthi SS, Sundararajan S (2008) A dual coordinate
  descent method for large-scale linear {SVM}. In: ICML '08: Proceedings of the
  25th international conference on Machine learning, ACM, New York, NY, USA, pp
  408--415

\bibitem[{Joachims et~al(2009)Joachims, Finley, and Yu}]{Joachims08a}
Joachims T, Finley T, Yu CNJ (2009) Cutting-plane training of structural svms.
  Mach Learn 77(1):27--59, \doi{10.1007/s10994-009-5108-8}

\bibitem[{Kolmogorov and Zabih(2002)}]{Kolmogorov02}
Kolmogorov V, Zabih R (2002) What energy functions can be minimized via graph
  cuts? In: Computer Vision - ECCV 2002: 7th European Conference on Computer
  Vision, Copenhagen, Denmark, May 28-31, 2002. Proceedings, Part III, pp
  185--208

\bibitem[{Kulesza and Pereira(2007)}]{Kulesza2007}
Kulesza A, Pereira F (2007) Structured learning with approximate inference. In:
  Advances in Neural Information Processing Systems

\bibitem[{Lafferty et~al(2001)Lafferty, McCallum, and Pereira}]{Lafferty01}
Lafferty J, McCallum A, Pereira F (2001) Conditional random fields:
  Probabilistic models for segmenting and labeling sequence data. In:
  International Conference on Machine Learning (ICML)

\bibitem[{Lowd and Domingos(2008)}]{Lowd08}
Lowd D, Domingos P (2008) Learning arithmetic circuits. In: Proceedings of the
  Proceedings of the Twenty-Fourth Conference Annual Conference on Uncertainty
  in Artificial Intelligence (UAI-08), AUAI Press, Corvallis, Oregon, pp
  383--392

\bibitem[{Mccallum et~al(2000)Mccallum, Freitag, and Pereira}]{Mccallum00}
Mccallum A, Freitag D, Pereira F (2000) Maximum entropy {M}arkov models for
  information extraction and segmentation. In: Proc. 17th International Conf.
  on Machine Learning, Morgan Kaufmann, San Francisco, CA, pp 591--598

\bibitem[{Neal(1992)}]{Neal92}
Neal RM (1992) Connectionist learning of belief networks. Artif Intell
  56(1):71--113

\bibitem[{Perez-Cruz et~al(2007)Perez-Cruz, Ghahramani, and Pontil}]{zoubin07}
Perez-Cruz F, Ghahramani Z, Pontil M (2007) Conditional graphical models

\bibitem[{Quattoni et~al(2004)Quattoni, Collins, and
  Darrell}]{Quattoni04conditionalrandom}
Quattoni A, Collins M, Darrell T (2004) Conditional random fields for object
  recognition. In: In NIPS, MIT Press, pp 1097--1104

\bibitem[{Rosenberg et~al(2007)Rosenberg, Klein, and Taskar}]{Rosenberg07}
Rosenberg D, Klein D, Taskar B (2007) Mixture-of-parents maximum entropy markov
  models. In: Proceedings of the Proceedings of the Twenty-Third Conference
  Annual Conference on Uncertainty in Artificial Intelligence (UAI-07), AUAI
  Press, Corvallis, Oregon, pp 318--325

\bibitem[{Shalev-Shwartz et~al(2007)Shalev-Shwartz, Singer, and
  Srebro}]{Shai07}
Shalev-Shwartz S, Singer Y, Srebro N (2007) Pegasos: Primal estimated
  sub-gradient solver for {SVM}. In: ICML '07: Proceedings of the 24th
  international conference on Machine learning, ACM, New York, NY, USA, pp
  807--814

\bibitem[{Shalev-Shwartz et~al(2008)Shalev-Shwartz, Srebro, and
  Sridharan}]{Shwartz08}
Shalev-Shwartz S, Srebro N, Sridharan K (2008) Fast rates for regularized
  objectives. In: Advances in Neural Information Processing Systems

\bibitem[{Tadepalli and Natarajan(1996)}]{Tadepalli96a}
Tadepalli P, Natarajan BK (1996) A formal framework for speedup learning from
  problems and solutions. Journal of Artificial Intelligence Research
  4:445--475

\bibitem[{Taskar et~al(2004)Taskar, Guestrin, and Koller}]{Taskar03}
Taskar B, Guestrin C, Koller D (2004) Max-margin markov networks. In: Advances
  in Neural Information Processing Systems (NIPS 2003), Vancouver, Canada

\bibitem[{Tsochantaridis et~al(2004)Tsochantaridis, Hofmann, Joachims, and
  Altun}]{Tsochantaridis04}
Tsochantaridis I, Hofmann T, Joachims T, Altun Y (2004) Support vector machine
  learning for interdependent and structured output spaces. In: ICML, p 104

\bibitem[{Wainwright et~al(2003)Wainwright, Jaakkola, and
  Willsky}]{Wainwright03treereweighted}
Wainwright M, Jaakkola T, Willsky A (2003) Tree-reweighted belief propagation
  algorithms and approximate ml estimation via pseudo-moment matching. In: M.
  Wainwright, T. Jaakkola, and A. Willsky. Tree-reweighted belief propagation
  algorithms and approximate ML estimation via pseudo-moment matching. In
  AISTATS-2003.

\bibitem[{Wainwright et~al(2005{\natexlab{a}})Wainwright, Jaakola, and
  Willsky}]{Wainwright05b}
Wainwright M, Jaakola T, Willsky A (2005{\natexlab{a}}) {MAP} estimation via
  agreement on trees: Message passing and linear programming. IEEE Trans on
  Information Theory 51:3697--3717

\bibitem[{Wainwright et~al(2005{\natexlab{b}})Wainwright, Jaakkola, and
  Willsky}]{Wainwright05a}
Wainwright MJ, Jaakkola T, Willsky AS (2005{\natexlab{b}}) A new class of upper
  bounds on the log partition function. IEEE Trans on Information Theory
  51:2313--2335

\bibitem[{Yedidia et~al(2005)Yedidia, Freeman, and Weiss}]{Yedidia05}
Yedidia JS, Freeman WT, Weiss Y (2005) Constructing free energy approximations
  and generalized belief propagation algorithms. IEEE Trans IT 51:2282--2312

\bibitem[{Zhang(2001)}]{zhang01statisticalbehavior}
Zhang T (2001) Statistical behavior and consistency of classification methods
  based on convex risk minimization. The Annals of Statistics 32(1):56--85

\end{thebibliography}

\end{document}